\theoremstyle{plain}
\newtheorem{lemma}{Lemma}
\title{Backdoor Mitigation via Invertible Pruning Masks}
\author[1]{\textbf{Kealan Dunnett}}
\author[2]{\textbf{Reza Arablouei}}
\author[1]{\textbf{Dimity Miller}}
\author[2]{\textbf{Volkan Dedeoglu}}
\author[1]{\textbf{Raja Jurdak}}
\affil[1]{Queensland University of Technology, Brisbane Australia}
\affil[2]{Data61, CSIRO, Pullenvale QLD, Australia}
\begin{document}
\maketitle

\begin{abstract}
  Model pruning has gained traction as a promising defense strategy against backdoor attacks in deep learning. However, existing pruning-based approaches often fall short in accurately identifying and removing the specific parameters responsible for inducing backdoor behaviors. Despite the dominance of fine-tuning-based defenses in recent literature, largely due to their superior performance, pruning remains a compelling alternative, offering greater interpretability and improved robustness in low-data regimes. In this paper, we propose a novel pruning approach featuring a learned \emph{selection} mechanism to identify parameters critical to both main and backdoor tasks, along with an \emph{invertible} pruning mask designed to simultaneously achieve two complementary goals: eliminating the backdoor task while preserving it through the inverse mask. We formulate this as a bi-level optimization problem that jointly learns selection variables, a sparse invertible mask, and sample-specific backdoor perturbations derived from clean data. The inner problem synthesizes candidate triggers using the inverse mask, while the outer problem refines the mask to suppress backdoor behavior without impairing clean-task accuracy. Extensive experiments demonstrate that our approach outperforms existing pruning-based backdoor mitigation approaches, maintains strong performance under limited data conditions, and achieves competitive results compared to state-of-the-art fine-tuning approaches. Notably, the proposed approach is particularly effective in restoring correct predictions for compromised samples after successful backdoor mitigation.
\end{abstract}

\section{Introduction}

The widespread adoption of deep learning across various sectors has brought increased scrutiny to its associated security vulnerabilities. While the performance benefits of modern deep learning methods are well recognized, their inherent opacity makes them susceptible to various forms of exploitation~\cite{liu2020privacy}. 
Notable examples of such threats are adversarial~\cite{szegedy2013intriguing} and data poisoning~\cite{yerlikaya2022data} attacks. In classification tasks, backdoor attacks are a critical concern. First introduced by \cite{gu2019badnets}, these attacks compromise model integrity by injecting a hidden behavior: inputs containing a specific trigger are misclassified into a target class, while clean inputs (i.e., those without this trigger) remain correctly classified. This allows the adversary to embed malicious functionality without disrupting the model’s performance on benign data.

The potential impact of backdoor attacks is especially severe in safety-critical applications that rely on automated decision-making. For example, driver assistance systems often depend on image classification modules to detect critical objects such as traffic signs. A subtle physical modification, such as placing a sticker on a stop sign, may exploit a backdoored model to induce targeted misclassification, with catastrophic consequences~\cite{pouyanfar2018survey}. These risks are exacerbated in scenarios where model training is outsourced to third parties. A recent survey by \cite{grosse2024towards} revealed that 48.1\% of respondents rely on third-party models, typically fine-tuning them for their specific use cases.

Backdoor mitigation aims to remove the malicious functionality induced by such attacks while preserving the model’s original classification capabilities. Within image classification, this has proven particularly challenging, as demonstrated by the recent studies~\cite{wu2022backdoorbench, dunnett2024countering}. Existing backdoor mitigation approaches generally fall into two broad categories: model pruning and fine-tuning~\cite{wu2022backdoorbench}. Pruning-based approaches seek to identify and remove model components (e.g., convolutional filters) associated with the backdoor behavior, whereas fine-tuning-based approaches adjust model parameters through a retraining process. To date, fine-tuning approaches have generally outperformed pruning approaches~\cite{dunnett2024countering, wu2022backdoorbench}, establishing fine-tuning as the dominant paradigm in backdoor mitigation literature.

However, it has been observed in~\cite{dunnett2024countering} that the performance of fine-tuning approaches degrade more significantly when limited data is available. Moreover, although fine-tuning approaches exhibit robust capabilities in removing backdoor behaviors, their ability to restore correct classification of backdoor-triggered inputs remains a persistent challenge \cite{wu2022backdoorbench, dunnett2024countering}. To address these limitations, we introduce a novel pruning-based approach to backdoor attack mitigation that aims to narrow the performance gap between fine-tuning and pruning. Our core contribution lies in rethinking the pruning paradigm through the introduction of two new principles: mask invertibility and component selection.

Current pruning-based approaches solely focus on removing model components associated with the backdoor task. While this perspective is essential, relying on it in isolation yields inconsistent and often brittle performance across different attack scenarios~\cite{wu2022backdoorbench}. In contrast, we propose using an invertible mask that unifies the standard pruning perspective with a complementary one to enhance robustness: selectively removing \emph{clean} components while \emph{retaining} those related to the backdoor. Specifically, we evaluate model behavior under two opposing conditions: (i) pruning clean-task components (via the inverse mask) and (ii) pruning backdoor-task components (via the standard mask). This dual-view approach enables a more principled dissection of model components, revealing their respective roles in clean and backdoor tasks.
%we examine how the clean and backdoor tasks behave when the components associated with each are independently pruned or retained. That is, we consider the case where clean components are pruned and backdoor components are kept (i.e., model pruned using the inverse mask), as well as the reverse (i.e., model pruned using the traditional pruning mask). By considering this additional perspective, we ensure that pruning can more effectively dissect which model components are critical for each task. 
To ensure the invertible mask remains practical and avoids excessive pruning, we introduce a selection mechanism that identifies components highly relevant to both the clean and backdoor tasks, marking them as \emph{eligible} for pruning. Subsequently, components not selected by this mechanism are automatically preserved and excluded from both pruning processes. Fig.~\ref{fig:pruning-compare} illustrates the key differences between our approach (C) and traditional pruning (B).

\begin{figure*}
    \centering
    \includegraphics[width=0.95\linewidth]{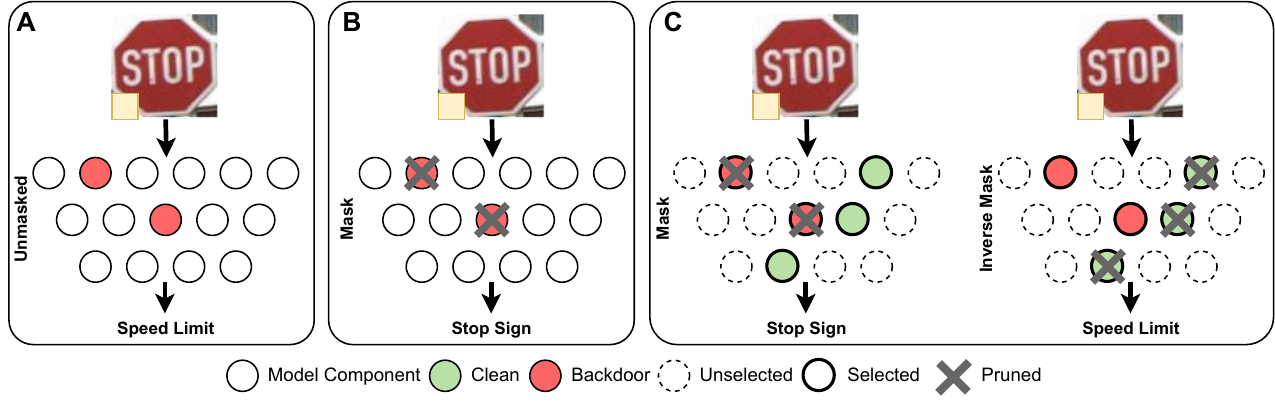}
    \caption{The unpruned backdoored model (A), conventional model pruning (B), and model pruning via an invertible mask (C).}
    \label{fig:pruning-compare}
     \vspace{-15pt}
\end{figure*}

To realize this idea, we develop a bi-level optimization framework inspired by~SAU~\cite{wei2023sau}. Similar to SAU, we address the lack of ground-truth backdoor data by synthesizing adversarial examples, additively perturbed versions of clean inputs that are misclassified in the same way by both the original model and a pruned variant obtained using the inverse mask (i.e., with clean components removed and backdoor components retained). %The synthesis involves estimating the perturbations, which effectively act as surrogate backdoor triggers. 
Subsequently, we estimate a pruning mask aimed at suppressing the influence of these synthesized triggers. In contrast to SAU, our approach replaces fine-tuning with pruning for both trigger synthesis and backdoor mitigation. Accordingly, we reformulate the objective to move beyond incremental parameter adaptation, focusing instead on structurally disentangling the model components associated with backdoor and clean behaviors. This enables a more robust and interpretable mechanism for jointly identifying and mitigating backdoor behaviors.

Through extensive experiments, we demonstrate that our approach significantly outperforms existing pruning-based defenses. Additionally, we show that it remains robust with limited data, highlighting its practicality and generalizability in real-world scenarios. Compared to its fine-tuning-based contenders, our approach more effectively restores correct classifications of backdoor samples following mitigation. This suggests that pruning, when augmented with mask invertibiity and component selection, can offer a more refined and resilient solution for backdoor mitigation.

\section{Related Work}

%\subsection{Backdoor Attacks}

\paragraph{Backdoor Attacks:} The seminal BadNets attack~\cite{gu2019badnets} demonstrated how embedding a backdoor task into a model could be achieved by manipulating a subset of training data, such as adding a colored square to specific input samples. Building on this foundational concept, subsequent research has led to increasingly sophisticated backdoor attacks. For example, \cite{li2021invisible} introduce an encoder-decoder-based approach that generates sample-specific triggers by jointly optimizing the encoder-decoder network and the classification model. Beyond trigger-based techniques, alternative strategies such as relabeling attacks have also been explored. For instance, \cite{zhao2020bridging} present an all-to-all attack that disrupts multiple classes simultaneously. In this work, we focus on the targeted attack setting, where the objective is to consistently misclassify samples toward a fixed target class $t$.

\paragraph{Backdoor Defense:} Mitigating backdoor attacks has predominantly involved model pruning and fine-tuning using clean training data, as initially proposed in~\cite{liu2018fine}. Subsequent research, such as~\cite{wang2019neural,zheng2022data}, has further refined these techniques. Among pruning-based approaches, ANP~\cite{wu2021anp}, AWN~\cite{chai2022awn}, and RNP~\cite{li2023rnp} closely align with our approach as ANP and AWN employ bi-level optimization to estimate optimal masks for channel-wise pruning of convolutional layers. Recent approaches, such as FMP~\cite{huangadversarial} and NFT~\cite{karim2024augmented}, utilize adversarial examples and the MixUp protocol to learn pruning masks that effectively suppress backdoors. FT-SAM~\cite{zhu2023ft-sam} and SAU~\cite{wei2023sau}, state-of-the-art fine-tuning approaches, mitigate backdoors by utilizing sharpness-aware minimax optimization and modified adversarial training via shared examples, respectively.
%Inspired by sharpness-aware minimisation methods, FT-SAM uses a minimax optimisation method to find a new local minima that is insensitive to weight perturbations. SAU fine-tunes the model using a modified adversarial training procedure that uses shared adversarial examples. 
%Moreover, BTI-DBF~\ref{} first prunes the model to train a generative model, which is then used to generate samples for fine-tuning. Notably, BTI-DBF doesn't use the pruning mask learnt during the initial step directly to prune the final model. Moreover, BTI-DBF uses an inverse-like mask as part of its design. Specifically, BTI-DBF take the binary components of the mask and maximises the classification error of clean samples. 
Moreover, BTI-DBF~\cite{xu2023towards} trains a generative model via initial pruning to generate samples for fine-tuning, then applies an inverse-like binary mask to maximize the classification error on clean samples as part of its mitigation strategy.
For comprehensive benchmarks and comparative analyses of these approaches, we refer readers to~\cite{wu2022backdoorbench,dunnett2024countering}.

While existing pruning approaches primarily focus on mask optimization (e.g., ANP, AWN, and RNP), and fine-tuning approaches solely rely on parameter adaptation (e.g., FT-SAM and SAU), our approach bridges these strategies, using mask invertibility and component selection to systematically disentangle backdoor components from clean behavior, achieving robust mitigation even with limited data.

\section{Preliminaries}\label{prelim}

\paragraph{Model Pruning:} Channel-wise masking of convolutional neural networks has been widely employed in backdoor mitigation literature~\cite{wu2021anp,chai2022awn,li2023rnp,karim2024augmented,huangadversarial}. Pruning-based approaches operate on the premise that specific subsets of convolutional components are responsible for backdoor behaviours. Empirical evidence supports this assumption~\cite{zheng2022bnp,chai2022awn,liu2018fine}, motivating the use of channel-wise masking to isolate and eliminate these deleterious components. This approach involves channel-wise multiplication of the kernel weights of each convolutional layer by a set of learned parameters bounded between $0$ and $1$, denoted as the mask vector $\mathbf{a}$.

When an entry of $\mathbf{a}$ is zero, the corresponding channel in the kernel tensor is effectively pruned through the application of the mask. However, in practice, most model pruning approaches do not directly apply masks to model parameters. Instead, $\mathbf{a}$ is commonly applied to the activation maps produced by each layer during the forward pass. Given the model parameters $\theta$ and the set of pruning masks $\mathcal{A} = \{\mathbf{a}_{1},\cdots,\mathbf{a}_{N}\}$ for $N$ convolutional layers, we represent the masked model as $\theta \odot \mathcal{A}$, where $\odot$ denotes the channel-wise application of each mask in $\mathcal{A}$ to its corresponding layer in $\theta$.

\paragraph{Threat Model:} In a targeted attack scenario, the adversary embeds a backdoor task into the model, causing inputs $\hat{x}$ containing a specific trigger pattern $\rho$ to be classified as a designated target class $t$. This is achieved by training the model using two datasets: the clean dataset $\mathcal{D}_{c}=\{(x, y)\}$, comprising original inputs $x$ with correct labels $y$, and the backdoor dataset $\mathcal{D}_{b}=\{(\hat{x},t)\}$. In certain cases, such as in~\cite{nguyen2020input}, $\rho$ is dynamic, varying for each $\hat{x}$. The adversary's goal is to train a set of model parameters $\theta'$ that correctly classify both $D_c$ and $D_b$. 

We consider a range of attacks documented in the literature without imposing any additional constraint beyond those outlined in the original works. For instance, in extreme cases like in \cite{wang2022bppattack}, the attacker is assumed to have complete control over the training process. From the defender's perspective, we assume access to a limited set of correctly labelled clean inputs $\mathcal{D}_{m}=\{(x, y)\}$ (i.e., inputs with no backdoor trigger) and the backdoored model parameters $\theta'$. In practice, the reliance on outsourced training often restricts defenders to a limited set of clean samples, underscoring the need for mitigation strategies that remain affective under such constraints. 

Furthermore, while our evaluation primarily focuses on convolutional neural networks, we also explore how IMS can be adapted for use with transformer-based architectures. Given the growing significance of Vision Transformers (ViTs) and the lack of existing pruning-based defenses tailored to them, we provide preliminary results demonstrating that ViTs can be successfully pruned using IMS to mitigate backdoor behaviour.

\section{Proposed Approach}

In this section, we introduce \textit{Invertible Masking using Selection} (IMS), a novel model-pruning-based approach for backdoor mitigation. We begin by defining the characteristics of an invertible mask and elucidating the importance of component selection. We then describe the proposed IMS approach within the context established in section~\ref{prelim}.

\subsection{Invertible Masking using Selection} \label{mask-characteristics}

\paragraph{Invertibility:} To overcome the limitations of existing pruning-based methods, we introduce a more flexible pruning paradigm termed invertibility, aimed at more effectively distinguishing between model components critical to backdoor and clean tasks. Unlike conventional pruning (e.g., ANP, AWN, and RNP), which is treated as a one-way subtractive process focused solely on removing backdoor components, we leverage invertibility to contrast the model's behaviour under two complementary pruning configurations: one in which clean components are pruned while backdoor components are retained (using the inverse mask), and another where the standard pruning mask is applied.
%when clean and backdoor components are pruned independently. That is, we consider the case where clean components are pruned and backdoor components are kept (i.e., model pruned using the inverse mask), as well as the reverse (i.e., model pruned using the traditional pruning mask).
This dual-objective approach serves two purposes: it verifies the correct identification of critical clean components, preventing their removal, and assesses whether the backdoor behavior persists when only backdoor components are retained and the clean components are pruned. Thus, enforcing mask invertibility enables a more nuanced distinction between components associated with clean and backdoor behaviors.
%Following this intuition, we argue that considering both objectives allows our method to better differentiate between components responsible for each task.

\paragraph{Selection:} Existing approaches generally categorize model components as being associated with either the backdoor task or the clean task. In contrast, we relax this binary assumption through \emph{selection}, allowing most model components to be shared between the two pruned models generated by applying the mask and its inverse. Specifically, we identify a subset of model components that are most relevant to both backdoor and clean tasks and apply the mask and its inverse exclusively to these selected components. The remaining unselected components form a shared backbone that supports both backdoor and clean tasks and are thus excluded from pruning by both the mask and its inverse, as depicted in Fig.~\ref{fig:pruning-compare} (C). Consequently, the selection mechanism helps prevent over-pruning and enables the practical construction of the mask and its inverse. 
%Relaxing this assumption ensures that the learned composite mask focuses on the most relevant model components and therefore avoids overpunning. Moreover, it also allows for the practical construction of the mask and its inverse. 
%Note that without these shared components, the invertible mask would need to form two fully-connected subnetworks that do not fully prune any layer.   
In the absence of shared components, the mask and its binary complement must partition the model into two connectivity-preserving subnetworks without entirely pruning any layer. However, this requirement conflicts with the objective of sparsity, imposing a prohibitive constraint.

\paragraph{Selective Invertible Masking:} To extend channel-wise masking to support invertibility through selection, we introduce a learned selection parameter vector $\mathbf{s} \in [0, 1]^{C_{\text{out}}}$ for each layer $l$. This selection vector, combined with the preliminary pruning mask $\mathbf{a} \in [0, 1]^{C_{\text{out}}}$ (introduced in section~\ref{prelim}), defines the mask $\mathbf{a'}$ and its inverse $\bar{\mathbf{a}}'$ for the $l^{\text{th}}$ layer as
\begin{align} \label{mask-inverse}
   \mathbf{a}' &= \sigma(k \: [\mathbf{a}-0.5])+\mathbf{s}\circ\sigma(k[(1-\mathbf{a})-0.5])\\
   \mathbf{\bar{a}}' &= \sigma(k[(1-\mathbf{a})-0.5])+\mathbf{s}\circ\sigma(k \: [\mathbf{a}-0.5])
\end{align}
where $\sigma(\cdot)$ represents the sigmoid function applied element-wise, $k$ is a scaling parameter that refines channel selection, $\circ$ denotes element-wise multiplication, and addition/subtraction is performed element-wise. Note that, $\mathbf{\bar{a}}' \neq 1 - \mathbf{a}'\neq 1 - \mathbf{a}$, when $\mathbf{s} > 0$ (element-wise), meaning the inverse mask is not simply the binary complement mask, as is the case in BTI-DBF~\cite{xu2023towards}.

In  Fig.~\ref{fig:mask_heatplot} (Appendix \ref{appendix:mask}), we visualize the relationship between $\mathbf{a}'$ and $\mathbf{\bar{a}}'$ across various $\mathbf{a}$ and $\mathbf{s}$ values. Moreover, we also provide a theoretical analysis of the invertibility of the proposed mask in Appendix ~\ref{appendix:mask}. Shared components (Unselected in Fig.~\ref{fig:pruning-compare}) correspond to entries of $\mathbf{s}$ with values near one, while selected components, which may be pruned by either the mask or its inverse (Selected in Fig.~\ref{fig:pruning-compare}) have corresponding entries in $\mathbf{s}$ with values close to zero. For the selected parameters, the mask and its inverse function as opposing counterparts, each targeting complementary subsets (one retaining what the other prunes), as illustrated in Fig.~\ref{fig:mask_heatplot}. 

\subsection{Optimization Framework} \label{optimisation-framework}

To estimate the optimal values of $\mathcal{A}$ and $\mathcal{S} = {\mathbf{s}_1, \cdots, \mathbf{s}_N}$, we minimize a composite objective within a bi-level optimization framework. The inner subproblem synthesizes sample-specific perturbations $\delta$ for clean inputs $x \in \mathcal{D}_m$ using the inverse mask $\bar{\mathcal{A}}' = {\bar{\mathbf{a}}'_1, \cdots, \bar{\mathbf{a}}'_N}$, capturing the dynamic nature of potential backdoor triggers. As the defender lacks access to actual backdoor data, these perturbations act as proxies for the adversarial manipulations. The outer subproblem then updates $\mathcal{A}$ and $\mathcal{S}$ to suppress backdoor behaviors while preserving clean-task functionality. During the initialization phase, we estimate $\mathcal{A}$ and $\mathcal{S}$ to identify components crucial only for the clean task. We provide further details of the optimization procedure in the subsequent sections, and an overview in Fig.~\ref{fig:overall_framework}. Moreover, in Appendix~\ref{appendix:algorithms}, we summarize the training procedure of IMS.

\begin{figure}[t]
    \centering
    \includegraphics[width=1\linewidth]{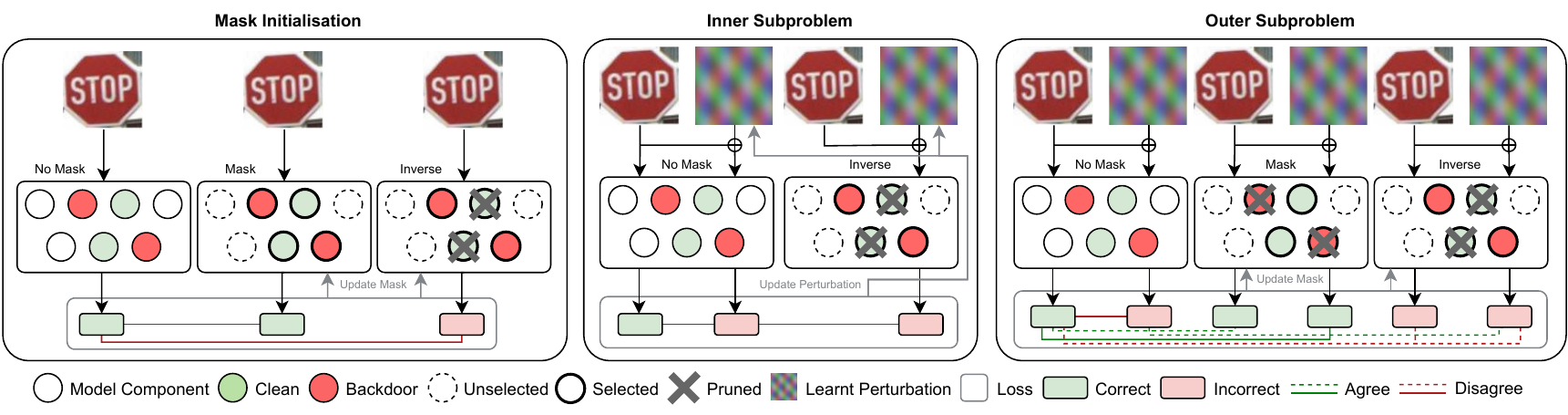}
    \caption{Summary of the bi-level optimization framework, illustrating the agreement and disagreement loss terms and their role in updating the mask or perturbation for each described subproblem. Dashed lines in the Outer Subproblem diagram indicate objectives shared with Mask Initialisation and Inner Subproblem.}
    \label{fig:overall_framework}
     \vspace{-5pt}
\end{figure}

\paragraph{Model Outputs:} Given model parameters $\theta$, the model mask $\mathcal{A}' = \{\mathbf{a}'_1, \cdots, \mathbf{a}'_N\}$ and its inverse $\mathcal{\bar{A}}'$, the softmax model outputs [denoted using the function $f(\cdot)$] for a clean input $x$ and its perturbed version $\hat{x} = x + \delta$ are expressed as
% \begin{equation}
% \begin{aligned}
%     p &= f(x, \theta), \quad &\quad \hat{p} &= f(\hat{x}, \theta), \\
%     p_{\mathcal{A}'} &= f(x, \theta \odot \mathcal{A}'), \quad &\hat{p}_{\mathcal{A}'} &= f(\hat{x}, \theta \odot \mathcal{A}'), \\
%     p_{\mathcal{\bar{A}}'} &= f(x, \theta \odot \mathcal{\bar{A}}'), \quad &\hat{p}_{\mathcal{\bar{A}}'} &= f(\hat{x}, \theta \odot \mathcal{\bar{A}}').
% \end{aligned}
% \end{equation}% \vspace{-5pt}
\begin{equation}
\begin{aligned}
    p = f(x, \theta), \:
    p_{\mathcal{A}'} = f(x, \theta \odot \mathcal{A}'), \: \hat{p}_{\mathcal{A}'} = f(\hat{x}, \theta \odot \mathcal{A}'), \: \\
    \hat{p} = f(\hat{x}, \theta), \:p_{\mathcal{\bar{A}}'} = f(x, \theta \odot \mathcal{\bar{A}}'), \: \hat{p}_{\mathcal{\bar{A}}'} = f(\hat{x}, \theta \odot \mathcal{\bar{A}}').
\end{aligned}
\end{equation}

\paragraph{Loss Functions:} The optimization framework incorporates agreement and disagreement loss functions, defined for given softmax outputs $\hat{q}$ and $q$ as \(\mathcal{L}_{\text{Disagree}}(\hat{q}, q) = - \log(1 -\langle \hat{q}, q \rangle)\) and \(\mathcal{L}_{\text{Agree}}(\hat{q}, q) = - \log(\langle \hat{q}, q \rangle)\), where $\langle \hat{q}, q \rangle = \sum_{c=1}^{C} \hat{q}_{c} \cdot q_{c}$ denotes the dot product and $C$ is the number of classes. %Minimizing these losses leads to contrasting behaviors: the agreement loss is minimized when $\hat{m}$ and $m$ are identical element-wise, while the disagreement loss is minimized when their overlap is minimized.
Minimizing the agreement loss promotes alignment between model outputs, while minimizing the disagreement loss encourages their divergence.

\paragraph{Mask Initialisation:} To estimate the initial values of the mask and its inverse, we use the clean data to identify the model components critical solely to the clean task. As shown in Fig.~\ref{fig:overall_framework}, this involves updating $\mathcal{A}$ and $\mathcal{S}$ such that $p$ aligns with $p_{\mathcal{A}'}$ while diverging from $p_{\mathcal{\bar{A}}'}$. Thus, we formulate the initialization optimisation problem as
\begin{align} \label{initial-problem}
    \min_{\mathcal{S}, \mathcal{A}} \: \mathcal{L}_{\text{Agree}}(p_{\mathcal{A}'}, p) + \mathcal{L}_{\text{Disagree}}(p_{\bar{\mathcal{A}}'}, p) + \frac{\lambda}{|\mathcal{S}|} \|\mathcal{S}\|_{1}.
\end{align}
The last term is a regularization penalty, defined as the $\ell_1$ norm of $\mathcal{S}$ scaled by the parameter $\lambda$ and normalised by the number of its entries $|\mathcal{S}|$. This term encourages sparsity in $\mathcal{S}$, mitigating the risk of excessive pruning. %Initialising the mask to invertibly affect the classification of the clean data, applying the mask and its inverse results in correct and incorrect classification respectively, forces the inner subproblem to exploit the leftover components to trigger misclassification.
By initializing the mask such that it preserves correct classification of clean data while its inverse leads to their misclassification, we compel the inner subproblem to rely on the unselected and backdoor components to estimate the backdoor trigger. %Given that the remaining components contain those associated with the backdoor, this biases them towards using features that trigger the backdoor task.%induce misclassification.

\paragraph{Inner subproblem:} To synthesize perturbed inputs $\hat{x}$ that elicit the backdoor behavior, we estimate $\delta$ for all available clean data instances by utilizing the outputs of the unmasked and inverse-masked models. Specifically, we aim to encourage disagreement between $\hat{p}$ and $p$ while simultaneously promoting agreement between $\hat{p}$ and $\hat{p}_{\mathcal{\bar{A}}'}$, as illustrated in Fig.~\ref{fig:overall_framework}. Since the inverse mask prunes components critical to the clean task, this objective drives the estimation of perturbations to rely on components not associated with the clean task. Therefore, this effectively biases perturbations towards utilising backdoor components to induce misclassification, a phenomenon supported by our experimental results. We express the inner subproblem as
\begin{equation} \label{inner-problem}
    \min_{\|\delta\|_{\infty} < \epsilon} \: \mathcal{L}_{\text{Disagree}}(\hat{p}, p) + \mathcal{L}_{\text{Agree}}(\hat{p}, \hat{p}_{\mathcal{\bar{A}}'}),
\end{equation}
where the $\ell_\infty$ norm constraint $\|\delta\|_{\infty} < \epsilon$ restricts perturbations to remain bounded within a specified tolerance. We set $\epsilon = 1$ in all experiments. Note that, due to the use of the AdamW optimizer rather than signed gradients, $\epsilon = 1$ serves as a hard upper bound on the perturbation magnitude rather than a target value. 

\paragraph{Outer subproblem:} Using the estimated perturbations $\delta$, the outer subproblem optimizes $\mathcal{S}$ and $\mathcal{A}$ to mitigate the effect of $\delta$. Specifically, we design it to minimize the agreement loss for $\tilde{p}_{\mathcal{A}'}$ and $p$, thereby restoring the correct classification of $\hat{x}$ when the mask $\mathcal{A}$ is applied. To maintain consistency with the objectives of the mask initialization and the inner subproblem, we incorporate their respective loss terms into the outer subproblem's objective function, as indicated by the dashed lines in Fig.~\ref{fig:overall_framework}. Their integration ensures that the updated $\mathcal{S}$ and $\mathcal{A}$ effectively mitigate the impact of the perturbations during inference, without compromising the objectives established in the preceding steps.
%to constrain the update of $\mathcal{S}$ and $\mathcal{A}$,
Consequently, we define the outer optimization subproblem as
\begin{equation}\label{outer-problem}
\min_{\mathcal{S}, \mathcal{A}}\ \mathcal{L} + \frac{\lambda}{|\mathcal{S}|} \|\mathcal{S}\|_{1},
\end{equation}
%where
\begin{equation}
    \mathcal{L} = \mathcal{L}_{\text{Agree}}(p_{\mathcal{A}'}, p) + \mathcal{L}_{\text{Agree}}(\tilde{p}_{\mathcal{A}'}, p) + \mathcal{L}_{\text{Disagree}}(\hat{p}_{\bar{\mathcal{A}}'}, p) + \mathcal{L}_{\text{Agree}}(\hat{p}, \hat{p}_{\bar{\mathcal{A}}'}) + \mathcal{L}_{\text{Disagree}}(p_{\bar{\mathcal{A}}'}, p).
\end{equation}
We solve the outer subproblem by initially setting $\lambda = 0$ and then increasing it to $\lambda = 10$.% to enhance sparsity. %We provide further details on solving the outer subproblem in Appendix X.

\section{Evaluation} \label{evaluation}

In this section, we evaluate the effectiveness of the proposed IMS approach in mitigating backdoor attacks within the BackdoorBench~\cite{wu2022backdoorbench} evaluation tool. Adopting the benchmarking methodology of \cite{dunnett2024countering}, we conduct a comprehensive evaluation across diverse experimental settings. These include 8 backdoor attacks, 4 model architectures, 3 datasets, and 3 poisoning ratios, a total of 288 distinct test cases.

\subsection{Experimental Setup}

We consider a range of representative backdoor attacks, including BadNets~\cite{gu2019badnets}, Blended~\cite{chen2017targeted}, Signal~\cite{barni2019new}, LF~\cite{zeng2021rethinking}, SSBA~\cite{li2021invisible}, IAB~\cite{nguyen2020input}, BPP~\cite{wang2022bppattack}, and WaNet~\cite{nguyen2021wanet}. We use the default attack configurations, as specified by \textit{BackdoorBench}~\cite{wu2022backdoorbench}, with poisoning ratios of 1\%, 5\%, and 10\%. In each case, the first class (indexed by $0$ in Python) is the backdoor attack target class. We utilize the CIFAR-10, German Traffic Sign Recognition Benchmark (GTSRB), and Tiny-ImageNet datasets, which contain 10, 43, and 200 classes, respectively. Moreover, we consider the PreAct-ResNet18 (ResNet), VGG-19 with batch normalisation (VGG), EfficientNet-B3 (EfficientNet), and MobileNetV3-Large (MobileNet) model architectures. For each dataset, we consider three data settings based on sample per class (SPC) values of 2, 10 and, 100.

We benchmark IMS against ANP~\cite{wu2021anp}, ANW~\cite{chai2022awn} and RNP~\cite{li2023rnp}, given their methodological alignment with our approach. Additionally, we include FP~\cite{liu2018fine} as a foundational baseline and the more recent related approaches NFT~\cite{karim2024augmented} and FMP~\cite{huangadversarial}. To provide a broader perspective, we further evaluate IMS against state-of-the-art fine-tuning-based approaches FT-SAM~\cite{zhu2023ft-sam} and SAU~\cite{wei2023sau}. We also include BTI-DBF~\cite{xu2023towards} due to its use of an inverse-like mask. We provide the implementation of IMS on \href{}{GitHub}.\footnote{https://github.com/WhoDunnett/BackdoorBenchmark}

\paragraph{Performance Measures:} To assess the effectiveness of backdoor mitigation, we adopt three performance measures as defined in \cite{dunnett2024countering}:

\begin{itemize}[leftmargin=*,noitemsep,topsep=0pt]
    \item \textit{Accuracy reduction ratio} (ARR), which quantifies the impact of mitigation on the original classification task. Lower ARR values indicate minimal disruption to the model's primary functionality.
    \item \textit{Attack success ratio} (ASR), which measures the extent to which the backdoor trigger continues to induce misclassifications after mitigation. An effective mitigation strategy results in a low ASR.
    \item \textit{Recovery difference ratio} (RDR), which evaluates how effectively the mitigation restores correct classification of samples containing the backdoor trigger. A low RDR signifies better recovery from backdoor contamination.
\end{itemize}

Ideally, all three measures (ARR, ASR, and RDR) approach zero, indicating minimal impact of mitigation on the original task (low ARR), effective neutralization of backdoor behavior (low ASR), and accurate classification of backdoored samples (low RDR). We provide detailed mathematical formulations of these measures in Appendix~\ref{appendix:evaluation_metrics}. Moreover, in Appendix~\ref{appendix:additional_results}, we extend the experimental analysis presented in section~\ref{results} with supplementary plots and discussions.
We further present additional evaluations of IMS, including a computational complexity analysis, performance on an ImageNet subset, the role of mask initalization, the impact of $\lambda$, the effect of $k$, the significance of selection, and robustness against Backdoor Reactivation~\cite{zhubreaking}. These are detailed in Appendices~\ref{appendix:complexity}, \ref{appendix:imagenet}, \ref{appendix:mask-init}, \ref{appendix:lambda}, \ref{appendix:k}, \ref{appendix:pruning-impact}, and \ref{appendix:reactivation}, respectively.

\subsection{Results} \label{results}

\begin{figure}[t]
    \centering
    \includegraphics[width=\linewidth]{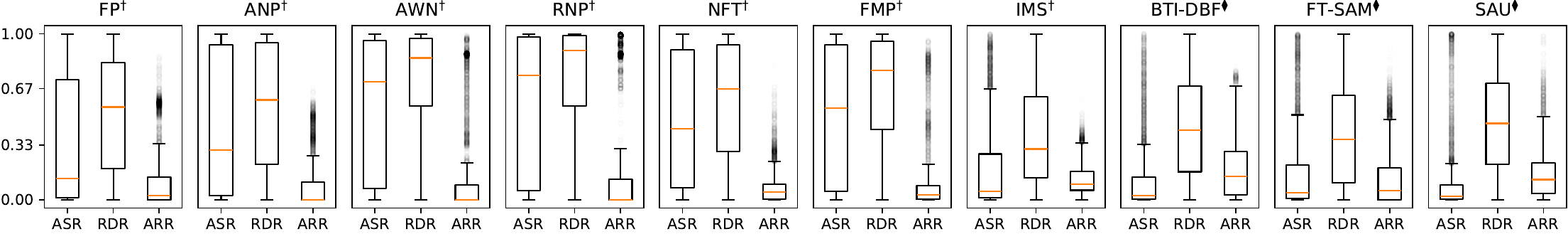}
    \caption{Box plots illustrating ASR, RDR, and ARR results of IMS against various pruning and fine-tuning approaches across all tested settings. $\dagger$: Pruning, $\blacklozenge$: Fine-tuning.}%, evaluated independently across all tested settings.}
    \label{fig:overall_results}
\end{figure}

In Fig.~\ref{fig:overall_results}, we provide an overview of the performance of IMS relative to prominent pruning- and fine-tuning-based approaches. We use box plots to summarize the distribution of ASR, RDR, and ARR values across all considered experimental scenarios. The results indicate that IMS consistently outperforms existing pruning approaches, FP, ANP, AWN, RNP, NFT, and FMP, in both ASR and RDR performance. Moreover, IMS demonstrates competitive ASR and improved RDR and ARR performance relative to state-of-the-art fine-tuning approaches, BTI-DBF, FT-SAM, and SAU. 
%While IMS exhibits a reduced capacity to eliminate the backdoor, as indicated by higher ASR relative to BTI-DBF, FT-SAM, and SAU, its
In particular, IMS's superior ARR and RDR performance represent a critical advantage, as enhanced RDR 
%reflects the ability to both suppress backdoor behaviour and recover its original classification. An improvement in RDR
, particularly when ARR remains largely unaffected, signifies a defence that not only suppresses the backdoor behaviour and restores correct classification of compromised samples but also preserves clean task performance. 

In Fig.~\ref{fig:spc_compare}, we compare the performance of the evaluated approaches across different SPC settings, with the corresponding median values of ASR, RDR, and ARR provided in Table~\ref{tab:spc_compare}.% we compare the relative SPC performance of pruning and fine-tuning methods across the three tested metrics.
Compared to other pruning-based approaches, IMS consistently achieves higher ASR and RDR performance across all data availability settings, with only a marginal increase in ARR. When benchmarked against fine-tuning-based approaches, IMS demonstrates superior RDR performance in the 2 and 10 SPC settings, accompanied by lower median ARR and reduced variance. In the 100 SPC setting, IMS outperforms BTI-DBF and SAU in terms of RDR and remains competitive with FT-SAM in terms of ARR and RDR. These findings underscore the robustness of IMS in mitigating backdoor attacks, especially under limited data conditions, effectively narrowing the performance gap between pruning- and fine-tuning-based defenses. Importantly, IMS substantially closes the performance gap to leading fine-tuning-based approaches, while maintaining the interpretability benefits inherent to pruning-based approaches. 
%These findings indicate that IMS can more robustly mitigate backdoor attacks compared to existing defences under limited data availability. Moreover, strong performance of IMS narrows the gap between pruning- and fine-tuning-based defences.%, suggesting its broader applicability as a robust backdoor mitigation strategy.

\begin{table}[t]
    \caption{Median ARR, ASR, and RDR results (x100) for IMS and various existing pruning and fine-tuning approaches with different SPC values. Bold values indicate the best performance among pruning or fine-tuning approaches, while underlined values represent the overall best performance.}
    \centering
    \scalebox{0.8}{
    \begin{tabular}{ll|ccccccc|ccc}
    \toprule
    & & \multicolumn{7}{c|}{Pruning} & \multicolumn{3}{c}{Fine-Tuning} \\
    SPC & Metric & FP & ANP & AWN & RNP & NFT & FMP & IMS & BTI-DBF & FT-SAM & SAU \\
    \hline
    \hline
    \multirow{3}{*}{2} & ARR & 3.2 & \underline{\textbf{0.0}} & \underline{\textbf{0.0}} & \underline{\textbf{0.0}} & 8.6 & 5.4 & 17.3 & 26.0 & \textbf{19.0} & 23.5 \\
    & ASR & 29.5 & 51.4 & 79.0 & 71.1 & 48.6 & 76.7 & \textbf{5.6} & 4.6 & 4.6 & \underline{\textbf{1.9}} \\
    & RDR & 66.5 & 65.8 & 87.2 & 90.2 & 75.0 & 90.5 & \underline{\textbf{38.9}} & 56.4 & \textbf{55.3} & 57.7 \\
    \hline
    \multirow{3}{*}{10} & ARR & 6.2 & \underline{\textbf{0.0}} & \underline{\textbf{0.0}} & \underline{\textbf{0.0}} & 3.8 & 3.3 & 9.8 & 14.7 & \textbf{3.6} & 11.6 \\
    & ASR & 6.0 & 25.2 & 74.7 & 80.6 & 52.0 & 59.0 & \textbf{4.7} & 2.4 & 4.8 & \underline{\textbf{2.0}} \\
    & RDR & 56.1 & 58.5 & 86.6 & 93.3 & 70.3 & 80.6 & \underline{\textbf{28.5}} & 45.2 & \textbf{33.7} & 42.1 \\
    \hline
    \multirow{3}{*}{100} & ARR & 0.4 & \underline{\textbf{0.0}} & 0.4 & \underline{\textbf{0.0}} & 1.4 & 0.4 & 6.5 & 7.8 & \textbf{0.0} & 4.5 \\
    & ASR & 12.4 & 16.7 & 66.5 & 80.6 & 34.5 & 32.0 & \textbf{4.2} & \textbf{1.9} & 3.3 & 2.3 \\
    & RDR & 36.7 & 56.9 & 82.1 & 89.6 & 59.0 & 55.7 & \textbf{21.2} & 24.4 & \underline{\textbf{15.5}} & 37.1 \\
    \bottomrule
    \end{tabular}} 
    \label{tab:spc_compare}
    \vspace{-5pt}
\end{table}

\begin{figure}[t]
    \centering
    \includegraphics[width=\linewidth]{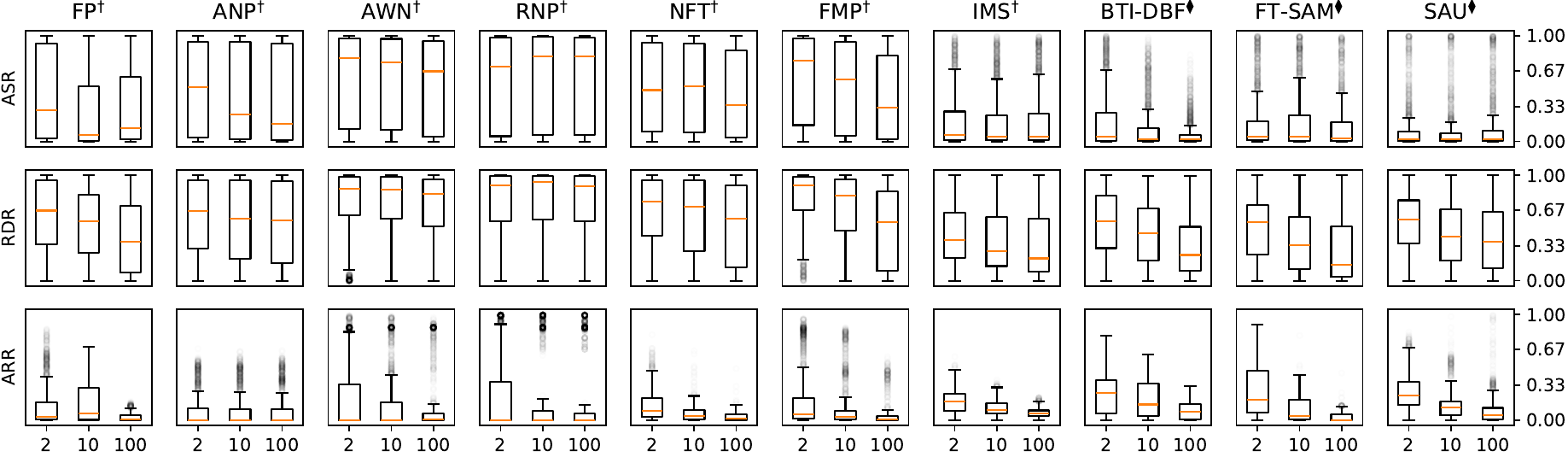}
    \caption{Box plots illustrating ASR, RDR, and ARR results for IMS and various existing pruning and fine-tuning approaches with SPC values of 2, 10, and 100. $\dagger$: Pruning, $\blacklozenge$: Fine-tuning.}
    \label{fig:spc_compare}
    \vspace{-10pt}
\end{figure}

In Fig.~\ref{fig:different_settings}, we compare the performance of IMS with that of the best-performing pruning approaches from Fig.~\ref{fig:overall_results} for various attack types and model architectures. The results in Fig.~\ref{fig:model} show that, unlike ANP and NFT, IMS exhibits competitive performance across all considered architectures. Notably, AWN and NFT show large variations in RDR and ASR performance, highlighting the challenges of designing pruning-based approaches that are robust to architectural changes. 
From Fig.~\ref{fig:attack}, we observe that IMS is less affected by attack variations compared to other pruning approaches. Specifically, the ASR and RDR performance of ANP and NFT varies considerably under different attack scenarios. While IMS also experiences some performance fluctuations, its overall consistency is substantially higher. In Appendix \ref{appendix:model_attack_extra}, we provide a comprehensive comparison of IMS with fine-tuning baselines, revealing that RDR performance patterns of IMS, BTI-DBF, FT-SAM, and SAU are similar across attack types, with IMS exhibiting lower variance and better median RDR performance in most cases. These findings suggest that certain attacks (e.g., Blended, LF, Signal, SSBA) pose similar challenges for both pruning and fine-tuning approaches.

In Fig.~\ref{fig:target_performance}, we present the scatter plots of the proportion of $\hat{x}$ (the perturbed inputs found by solving the inner subproblem) classified as the target class post mitigation and the ASR achieved by IMS, for each dataset. Note that knowledge of the proportion of $\hat{x}$ classified as the target class is not accessible to the defender during mitigation, as the target class is presumed to be unknown. The results indicate that as the proportion of $\hat{x}$ samples classified as the target class increases, ASR decreases. Moreover, we observe that mandating mask invertibility effectively biases $\hat{x}$ toward the backdoor task, as in most cases, the target class proportion exceeds its expected frequency, as indicated by the dashed lines in Fig.~\ref{fig:target_performance}. The results support the intuition outlined in Section~\ref{mask-characteristics}, suggesting that the inverse mask directs perturbations to utilize backdoor components. Given that IMS reduces ASR significantly when the target class proportion is high, future work can focus on further intensifying this effect to improve robustness. 

\begin{figure}
    \begin{subfigure}[b]{0.43\textwidth}
        \centering
        \includegraphics[width=\linewidth]{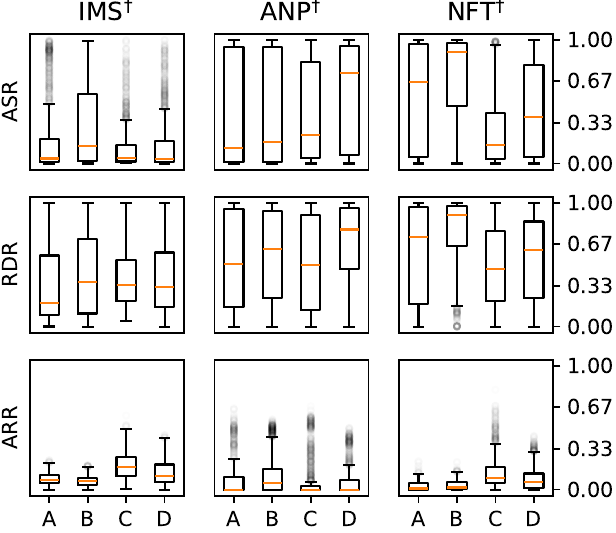}
        \caption{Model}
        \label{fig:model}
    \end{subfigure}
    \begin{subfigure}[b]{0.55\textwidth}
        \centering
        \includegraphics[width=\linewidth]{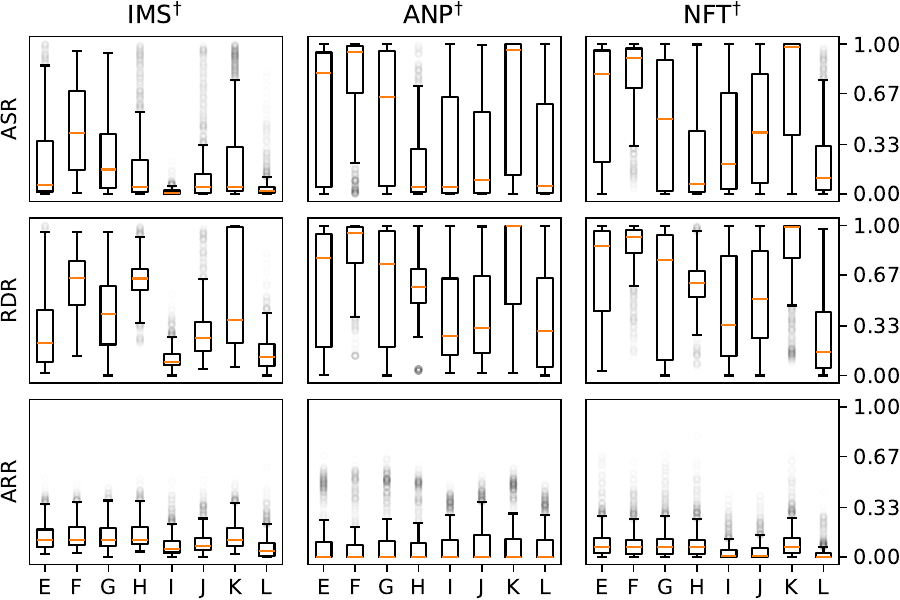}
        \caption{Attack}
        \label{fig:attack}
    \end{subfigure}
    \caption{Box plots illustrating ASR, RDR, and ARR results for IMS, ANP, and NFT, across different model and attack settings. In (A) A: VGG, B: ResNet, C: EfficientNet, D: MobileNet. In (B) E: BadNet, F: Blended, G: LF, H: Signal, I: BPP, J: Inputaware, K: SSBA, L: WaNet.}
    \label{fig:different_settings}
    \vspace{-10pt}
\end{figure}

\begin{figure}
    \begin{subfigure}[b]{0.36\textwidth}
        \centering
        \includegraphics[width=\linewidth]{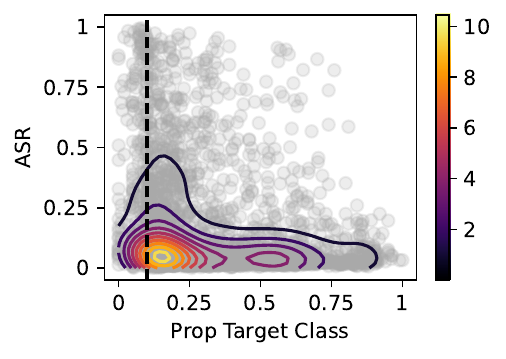} % DEBUG: WILL CHANGE TO PDF
        \caption{CIFAR-10}
        \label{fig:cifar-kde}
    \end{subfigure}
    \begin{subfigure}[b]{0.305\textwidth}
        \centering
        \includegraphics[width=\linewidth]{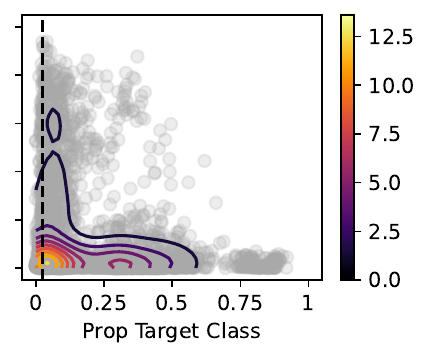} % DEBUG: WILL CHANGE TO PDF
        \caption{GTSRB}
        \label{fig:gtsrb-kde}
    \end{subfigure}
    \begin{subfigure}[b]{0.305\textwidth}
        \centering
        \includegraphics[width=\linewidth]{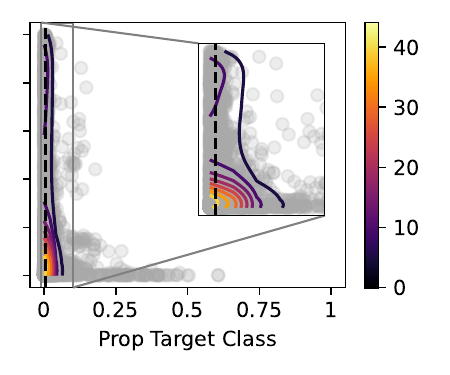} % DEBUG: WILL CHANGE TO PDF
        \caption{Tiny}
        \label{fig:tiny-kde}
    \end{subfigure}
    \caption{Scatter plots of the proportion of $\hat{x}$ classified as the target class vs ASR of IMS for each dataset. The vertical dashed lines represent the expected frequency of the target class $\frac{1}{C}$, with $C$ being the number of classes, 10, 43, and 200 for CIFAR-10, GTSRB, and Tiny-ImageNet, respectively.}
    \label{fig:target_performance}
    \vspace{-10pt}
\end{figure}

\subsection{Adapting IMS to ViTs}

Vision Transformers (ViTs) have gained significant popularity in image classification tasks in recent years. Similar to CNNs, several studies have shown that ViTs are highly overparameterized and can be pruned to improve efficiency. Notably, in \cite{xu2024lpvit}, it is demonstrated that ViTs can sustain high levels of pruning while maintaining strong performance. Building on this insight, pruning encoder blocks appears to be a promising strategy for extending pruning-based approaches to ViT architectures. We hypothesize that backdoor behavior in ViTs may be encoded within specific embedding dimensions of each encoder block. Therefore, applying a mask to selectively prune these embedding dimensions represents a natural and effective starting point for backdoor mitigation. 

To evaluate this strategy, we apply IMS to ViT-B/16 models trained on GTSRB with the BadNet, BPP, and IAB attacks, using a poisoning ratio of 10\%. Table~\ref{tab:vit} reports the median performance (ARR, RDR, and ASR) of IMS across different SPC values. Since it is observed in \cite{xu2024lpvit} that ViTs tolerate substantial pruning, we omit the sparsity regularization term from Equation (6) in this adaptation.

IMS performs strongly on BadNet and BPP, maintaining low ASR, RDR, and ARR (all below 10) even as SPC is reduced. On IAB, IMS is less effective, though modest reductions in ASR and RDR are still observed.

This preliminary investigation highlights the potential of IMS and more broadly pruning-based defenses for application to ViTs. Future work can explore which components within the transformer architecture are most suitable for pruning, as the embedding dimension is only one of several possible targets. For example, the query, key, and value projection matrices may offer promising pruning opportunities, as suggested by recent feature selection methods such as \cite{huang2025efficient}.

\begin{table}[t]
\centering
\caption{Median ASR, RDR, and ARR of IMS when adopted for ViTs.}
\label{tab:vit}
\begin{tabular}{l|ccc|ccc|ccc}
\toprule
\multirow{2}{*}{Attack} & \multicolumn{3}{c|}{SPC = 2} & \multicolumn{3}{c|}{SPC = 10} & \multicolumn{3}{c}{SPC = 100} \\
 & ASR & RDR & ARR & ASR & RDR & ARR & ASR & RDR & ARR \\
\midrule
BadNet & 2.2 & 6.3 & 4.0 & 6.9 & 9.8 & 3.5 & 4.9 & 6.9 & 2.0 \\
BPP    & 0.0 & 7.3 & 0.8 & 0.1 & 9.1 & 2.2 & 0.1 & 8.3 & 1.6 \\
IAB    & 67.4 & 67.8 & 3.1 & 73.4 & 73.5 & 2.7 & 67.3 & 67.4 & 1.9 \\
\bottomrule
\end{tabular}
\end{table}

\section{Concluding Remarks} %Discussion and Future Work

The experimental evaluations presented in section~\ref{evaluation} demonstrate the superiority of IMS over state-of-the-art model-pruning-based approaches. In particular, IMS outperforms the seminal pruning approach FP~\cite{liu2018fine} across a range of challenging settings, contrasting prior observations, notably those in~\cite{dunnett2024countering}, which indicate that subsequent pruning-based approaches rarely achieve consistent improvements over FP. These results underscore the effectiveness of our proposed invertible pruning mask and selection mechanism for robust backdoor mitigation in convolutional neural networks, especially in data-scarce settings.

Comparisons with recent pruning approaches, including ANP, AWN, RNP, NFT, FMP, and BTI-DBF, emphasize the advantages of introducing mask invertibility. Our performance gains, supported by the results in Fig.~\ref{fig:target_performance}, validate the intuition that invertible masks enable a more principled decomposition of model components based on their roles in clean versus backdoor tasks. This decomposition isolates the impact of pruning on each task, improving performance and interpretability. However, IMS's varied effectiveness against certain attack types, such as Blended and SSBA, indicates opportunities for future refinement. 

When benchmarked against leading fine-tuning-based defenses such as FT-SAM and SAU, IMS delivers competitive results. While FT-SAM and SAU typically achieve stronger suppression of backdoor behavior, as reflected in their shorter and lighter ASR distribution tails, IMS consistently achieves superior RDR performance and better preserves ARR performance. This suggests that IMS excels in restoring classification accuracy post-mitigation, a capability less consistently observed in FT-SAM and SAU. This trade-off between backdoor suppression and classification recovery is especially evident in data-limited settings, underscoring the unique strengths of model pruning via selective and invertible masks. Although the performance improvements over fine-tuning are modest, achieving such results through pruning alone is notable. As fine-tuning has dominated recent research in this area, our findings provide renewed motivation for considering pruning-based alternatives.    

Looking ahead, combining model pruning via selective and invertible masks with fine-tuning approaches such as FT-SAM or SAU holds promise for developing more effective backdoor defenses. By integrating the structured removal of backdoor-related components, facilitated by invertible masking, with the adaptive optimization of fine-tuning, such hybrid approaches can achieve more effective backdoor mitigation while preserving high classification accuracy. Although BTI-DBF makes an initial attempt to bridge this gap by using pruning to initialise fine-tuning, its limited success, often underperforming compared to FT-SAM, suggests that more sophisticated strategies are required. Our work lays the groundwork for such future innovations. 

Finally, our preliminary evaluation of ViT-based models suggests that pruning-based approaches developed for CNNs may be transferable to transformer architectures. Future work can investigate which components of transformers are most effective targets for pruning in the context of backdoor mitigation.

\section{Acknowledgement}
Dimity Miller acknowledges ongoing support from the QUT Centre for Robotics. We acknowledge the support of QUT eResearch for providing the computing facilities required to run our experiments.

\bibliography{ref}

\begin{thebibliography}{10}

\bibitem{liu2020privacy}
X.~Liu, L.~Xie, Y.~Wang, J.~Zou, J.~Xiong, Z.~Ying, and A.~V. Vasilakos, ``Privacy and security issues in deep learning: A survey,'' {\em IEEE Access}, vol.~9, pp.~4566--4593, 2020.

\bibitem{szegedy2013intriguing}
C.~Szegedy, W.~Zaremba, I.~Sutskever, J.~Bruna, D.~Erhan, I.~Goodfellow, and R.~Fergus, ``Intriguing properties of neural networks,'' {\em arXiv preprint arXiv:1312.6199}, 2013.

\bibitem{yerlikaya2022data}
F.~A. Yerlikaya and {\c{S}}.~Bahtiyar, ``Data poisoning attacks against machine learning algorithms,'' {\em Expert Systems with Applications}, vol.~208, p.~118101, 2022.

\bibitem{gu2019badnets}
T.~Gu, B.~Dolan-Gavitt, and S.~Garg, ``Badnets: Identifying vulnerabilities in the machine learning model supply chain,'' {\em arXiv preprint arXiv:1708.06733}, 2017.

\bibitem{pouyanfar2018survey}
S.~Pouyanfar, S.~Sadiq, Y.~Yan, H.~Tian, Y.~Tao, M.~P. Reyes, M.-L. Shyu, S.-C. Chen, and S.~S. Iyengar, ``A survey on deep learning: Algorithms, techniques, and applications,'' {\em ACM Computing Surveys (CSUR)}, vol.~51, no.~5, pp.~1--36, 2018.

\bibitem{grosse2024towards}
K.~Grosse, L.~Bieringer, T.~R. Besold, and A.~M. Alahi, ``Towards more practical threat models in artificial intelligence security,'' in {\em 33rd USENIX Security Symposium (USENIX Security 24)}, pp.~4891--4908, 2024.

\bibitem{wu2022backdoorbench}
B.~Wu, H.~Chen, M.~Zhang, Z.~Zhu, S.~Wei, D.~Yuan, and C.~Shen, ``Backdoorbench: A comprehensive benchmark of backdoor learning,'' {\em Advances in Neural Information Processing Systems}, vol.~35, pp.~10546--10559, 2022.

\bibitem{dunnett2024countering}
K.~Dunnett, R.~Arablouei, D.~Miller, V.~Dedeoglu, and R.~Jurdak, ``Countering backdoor attacks in image recognition: A survey and evaluation of mitigation strategies,'' {\em arXiv preprint arXiv:2411.11200}, 2024.

\bibitem{wei2023sau}
S.~Wei, M.~Zhang, H.~Zha, and B.~Wu, ``Shared adversarial unlearning: Backdoor mitigation by unlearning shared adversarial examples,'' {\em Advances in Neural Information Processing Systems}, vol.~36, pp.~25876--25909, 2023.

\bibitem{li2021invisible}
Y.~Li, Y.~Li, B.~Wu, L.~Li, R.~He, and S.~Lyu, ``Invisible backdoor attack with sample-specific triggers,'' in {\em Proceedings of the IEEE/CVF international conference on computer vision}, pp.~16463--16472, 2021.

\bibitem{zhao2020bridging}
P.~Zhao, P.-Y. Chen, P.~Das, K.~N. Ramamurthy, and X.~Lin, ``Bridging mode connectivity in loss landscapes and adversarial robustness,'' {\em arXiv preprint arXiv:2005.00060}, 2020.

\bibitem{liu2018fine}
K.~Liu, B.~Dolan-Gavitt, and S.~Garg, ``Fine-pruning: Defending against backdooring attacks on deep neural networks,'' in {\em International symposium on research in attacks, intrusions, and defenses}, pp.~273--294, Springer, 2018.

\bibitem{wang2019neural}
B.~Wang, Y.~Yao, S.~Shan, H.~Li, B.~Viswanath, H.~Zheng, and B.~Y. Zhao, ``Neural cleanse: Identifying and mitigating backdoor attacks in neural networks,'' in {\em 2019 IEEE Symposium on Security and Privacy (SP)}, pp.~707--723, IEEE, 2019.

\bibitem{zheng2022data}
R.~Zheng, R.~Tang, J.~Li, and L.~Liu, ``Data-free backdoor removal based on channel lipschitzness,'' in {\em European Conference on Computer Vision}, pp.~175--191, Springer, 2022.

\bibitem{wu2021anp}
D.~Wu and Y.~Wang, ``Adversarial neuron pruning purifies backdoored deep models,'' {\em Advances in Neural Information Processing Systems}, vol.~34, pp.~16913--16925, 2021.

\bibitem{chai2022awn}
S.~Chai and J.~Chen, ``One-shot neural backdoor erasing via adversarial weight masking,'' {\em Advances in Neural Information Processing Systems}, vol.~35, pp.~22285--22299, 2022.

\bibitem{li2023rnp}
Y.~Li, X.~Lyu, X.~Ma, N.~Koren, L.~Lyu, B.~Li, and Y.-G. Jiang, ``Reconstructive neuron pruning for backdoor defense,'' in {\em International Conference on Machine Learning}, pp.~19837--19854, PMLR, 2023.

\bibitem{huangadversarial}
D.~Huang and Q.~Bu, ``Adversarial feature map pruning for backdoor,'' in {\em International Conference on Learning Representations}, 2024.

\bibitem{karim2024augmented}
N.~Karim, A.~A. Arafat, U.~Khalid, Z.~Guo, and N.~Rahnavard, ``Augmented neural fine-tuning for efficient backdoor purification,'' in {\em European Conference on Computer Vision}, pp.~401--418, Springer, 2024.

\bibitem{zhu2023ft-sam}
M.~Zhu, S.~Wei, L.~Shen, Y.~Fan, and B.~Wu, ``Enhancing fine-tuning based backdoor defense with sharpness-aware minimization,'' in {\em Proceedings of the IEEE/CVF International Conference on Computer Vision}, pp.~4466--4477, 2023.

\bibitem{xu2023towards}
X.~Xu, K.~Huang, Y.~Li, Z.~Qin, and K.~Ren, ``Towards reliable and efficient backdoor trigger inversion via decoupling benign features,'' in {\em International Conference on Learning Representations}, 2023.

\bibitem{zheng2022bnp}
R.~Zheng, R.~Tang, J.~Li, and L.~Liu, ``Pre-activation distributions expose backdoor neurons,'' {\em Advances in Neural Information Processing Systems}, vol.~35, pp.~18667--18680, 2022.

\bibitem{nguyen2020input}
T.~A. Nguyen and A.~Tran, ``Input-aware dynamic backdoor attack,'' {\em Advances in Neural Information Processing Systems}, vol.~33, pp.~3454--3464, 2020.

\bibitem{wang2022bppattack}
Z.~Wang, J.~Zhai, and S.~Ma, ``Bppattack: Stealthy and efficient trojan attacks against deep neural networks via image quantization and contrastive adversarial learning,'' in {\em Proceedings of the IEEE/CVF Conference on Computer Vision and Pattern Recognition}, pp.~15074--15084, 2022.

\bibitem{chen2017targeted}
X.~Chen, C.~Liu, B.~Li, K.~Lu, and D.~Song, ``Targeted backdoor attacks on deep learning systems using data poisoning,'' {\em arXiv preprint arXiv:1712.05526}, 2017.

\bibitem{barni2019new}
M.~Barni, K.~Kallas, and B.~Tondi, ``A new backdoor attack in cnns by training set corruption without label poisoning,'' in {\em 2019 IEEE International Conference on Image Processing (ICIP)}, pp.~101--105, IEEE, 2019.

\bibitem{zeng2021rethinking}
Y.~Zeng, W.~Park, Z.~M. Mao, and R.~Jia, ``Rethinking the backdoor attacks' triggers: A frequency perspective,'' in {\em Proceedings of the IEEE/CVF international conference on computer vision}, pp.~16473--16481, 2021.

\bibitem{nguyen2021wanet}
A.~Nguyen and A.~Tran, ``Wanet--imperceptible warping-based backdoor attack,'' {\em arXiv preprint arXiv:2102.10369}, 2021.

\bibitem{zhubreaking}
M.~Zhu, S.~Liang, and B.~Wu, ``Breaking the false sense of security in backdoor defense through re-activation attack,'' in {\em The Thirty-eighth Annual Conference on Neural Information Processing Systems}.

\bibitem{xu2024lpvit}
K.~Xu, Z.~Wang, C.~Chen, X.~Geng, J.~Lin, X.~Yang, M.~Wu, X.~Li, and W.~Lin, ``Lpvit: Low-power semi-structured pruning for vision transformers,'' in {\em European Conference on Computer Vision}, pp.~269--287, Springer, 2024.

\bibitem{huang2025efficient}
L.~Huang, J.~Zeng, M.~Yu, W.~Ding, X.~Bai, and K.~Wang, ``Efficient feature selection for pre-trained vision transformers,'' {\em Computer Vision and Image Understanding}, vol.~254, p.~104326, 2025.

\end{thebibliography}
\bibliographystyle{ieeetr}

%%%%%%%%%%%%%%%%%%%%%%%%%%%%%%%%%%%%%%%%%%%%%%%%%%%%%%%%%%%%

\clearpage
\newpage
\section*{NeurIPS Paper Checklist}

\begin{enumerate}[leftmargin=*]

\item {\bf Claims}
    \item[] Question: Do the main claims made in the abstract and introduction accurately reflect the paper's contributions and scope?
    \item[] Answer: \answerYes{} % Replace by \answerYes{}, \answerNo{}, or \answerNA{}.
    \item[] Justification: In the abstract, we make three main claims: the proposed approach outperforms existing pruning-based backdoor mitigation approaches, maintains robust performance under limited data, and achieves competitive results compared to fine-tuning-based approaches. Our results in section~\ref{results} confirm that IMS meets these claims. We evaluate IMS across 288 distinct test cases and on an ImageNet subset, supporting its generalizability to broader settings. Additionally, we assert that the proposed invertible pruning masks enable IMS to steer the inner subproblem toward generating perturbations that effectively induce backdoor behavior. While a formal theoretical proof is not provided, we include an empirical analysis of the perturbations generated during this step (cf Fig.~\ref{fig:target_performance} and discussion in section~\ref{results}), demonstrating that they exhibit backdoor-trigger-like characteristics. 
    \item[] Guidelines:
    \begin{itemize}
        \item The answer NA means that the abstract and introduction do not include the claims made in the paper.
        \item The abstract and/or introduction should clearly state the claims made, including the contributions made in the paper and important assumptions and limitations. A No or NA answer to this question will not be perceived well by the reviewers. 
        \item The claims made should match theoretical and experimental results, and reflect how much the results can be expected to generalize to other settings. 
        \item It is fine to include aspirational goals as motivation as long as it is clear that these goals are not attained by the paper. 
    \end{itemize}

\item {\bf Limitations}
    \item[] Question: Does the paper discuss the limitations of the work performed by the authors?
    \item[] Answer: \answerYes{} % Replace by \answerYes{}, \answerNo{}, or \answerNA{}.
    \item[] Justification: We emphasize that our results are not universally superior and, in some cases, underperform compared to existing fine-tuning baselines. We also highlight that our approach does not improve robustness against backdoor reactivation attacks.
    \item[] Guidelines:
    \begin{itemize}
        \item The answer NA means that the paper has no limitation while the answer No means that the paper has limitations, but those are not discussed in the paper. 
        \item The authors are encouraged to create a separate "Limitations" section in their paper.
        \item The paper should point out any strong assumptions and how robust the results are to violations of these assumptions (e.g., independence assumptions, noiseless settings, model well-specification, asymptotic approximations only holding locally). The authors should reflect on how these assumptions might be violated in practice and what the implications would be.
        \item The authors should reflect on the scope of the claims made, e.g., if the approach was only tested on a few datasets or with a few runs. In general, empirical results often depend on implicit assumptions, which should be articulated.
        \item The authors should reflect on the factors that influence the performance of the approach. For example, a facial recognition algorithm may perform poorly when image resolution is low or images are taken in low lighting. Or a speech-to-text system might not be used reliably to provide closed captions for online lectures because it fails to handle technical jargon.
        \item The authors should discuss the computational efficiency of the proposed algorithms and how they scale with dataset size.
        \item If applicable, the authors should discuss possible limitations of their approach to address problems of privacy and fairness.
        \item While the authors might fear that complete honesty about limitations might be used by reviewers as grounds for rejection, a worse outcome might be that reviewers discover limitations that aren't acknowledged in the paper. The authors should use their best judgment and recognize that individual actions in favor of transparency play an important role in developing norms that preserve the integrity of the community. Reviewers will be specifically instructed to not penalize honesty concerning limitations.
    \end{itemize}

\item {\bf Theory assumptions and proofs}
    \item[] Question: For each theoretical result, does the paper provide the full set of assumptions and a complete (and correct) proof?
    \item[] Answer: \answerYes{} % Replace by \answerYes{}, \answerNo{}, or \answerNA{}.
    \item[] Justification: While some works, mostly those based on fine-tuning, are able to establish certain theoretical claims, the non-convexity associated with applying pruning masks to considered CNN architectures renders similar theoretical analyses intractable. Consequently, we refrain from making any strong theoretical claims and instead validate our approach empirically, consistent with standard practice in the backdoor mitigation literature. However, we provide a brief analysis of the proposed invertible and selective masks in section~\ref{appendix:analysis}. This analysis establishes that the proposed invertibility mechanism is achieved by the composition of the proposed channel selection parameters with the original pruning mask.
    \item[] Guidelines:
    \begin{itemize}
        \item The answer NA means that the paper does not include theoretical results. 
        \item All the theorems, formulas, and proofs in the paper should be numbered and cross-referenced.
        \item All assumptions should be clearly stated or referenced in the statement of any theorems.
        \item The proofs can either appear in the main paper or the supplemental material, but if they appear in the supplemental material, the authors are encouraged to provide a short proof sketch to provide intuition. 
        \item Inversely, any informal proof provided in the core of the paper should be complemented by formal proofs provided in appendix or supplemental material.
        \item Theorems and Lemmas that the proof relies upon should be properly referenced. 
    \end{itemize}

    \item {\bf Experimental result reproducibility}
    \item[] Question: Does the paper fully disclose all the information needed to reproduce the main experimental results of the paper to the extent that it affects the main claims and/or conclusions of the paper (regardless of whether the code and data are provided or not)?
    \item[] Answer: \answerYes{} % Replace by \answerYes{}, \answerNo{}, or \answerNA{}.
    \item[] Justification: In the Experimental Results section, we describe the full range of considered settings. In addition, we provide the complete code required to evaluate our approach, along with README and configuration files for ease of implementation.
    \item[] Guidelines:
    \begin{itemize}
        \item The answer NA means that the paper does not include experiments.
        \item If the paper includes experiments, a No answer to this question will not be perceived well by the reviewers: Making the paper reproducible is important, regardless of whether the code and data are provided or not.
        \item If the contribution is a dataset and/or model, the authors should describe the steps taken to make their results reproducible or verifiable. 
        \item Depending on the contribution, reproducibility can be accomplished in various ways. For example, if the contribution is a novel architecture, describing the architecture fully might suffice, or if the contribution is a specific model and empirical evaluation, it may be necessary to either make it possible for others to replicate the model with the same dataset, or provide access to the model. In general. releasing code and data is often one good way to accomplish this, but reproducibility can also be provided via detailed instructions for how to replicate the results, access to a hosted model (e.g., in the case of a large language model), releasing of a model checkpoint, or other means that are appropriate to the research performed.
        \item While NeurIPS does not require releasing code, the conference does require all submissions to provide some reasonable avenue for reproducibility, which may depend on the nature of the contribution. For example
        \begin{enumerate}
            \item If the contribution is primarily a new algorithm, the paper should make it clear how to reproduce that algorithm.
            \item If the contribution is primarily a new model architecture, the paper should describe the architecture clearly and fully.
            \item If the contribution is a new model (e.g., a large language model), then there should either be a way to access this model for reproducing the results or a way to reproduce the model (e.g., with an open-source dataset or instructions for how to construct the dataset).
            \item We recognize that reproducibility may be tricky in some cases, in which case authors are welcome to describe the particular way they provide for reproducibility. In the case of closed-source models, it may be that access to the model is limited in some way (e.g., to registered users), but it should be possible for other researchers to have some path to reproducing or verifying the results.
        \end{enumerate}
    \end{itemize}

\item {\bf Open access to data and code}
    \item[] Question: Does the paper provide open access to the data and code, with sufficient instructions to faithfully reproduce the main experimental results, as described in supplemental material?
    \item[] Answer: \answerYes{} % Replace by \answerYes{}, \answerNo{}, or \answerNA{}.
    \item[] Justification: We provide the code and data required to reproduce the results presented in the paper, together with comprehensive instructions and implementation details.
    \item[] Guidelines:
    \begin{itemize}
        \item The answer NA means that paper does not include experiments requiring code.
        \item Please see the NeurIPS code and data submission guidelines (\url{https://nips.cc/public/guides/CodeSubmissionPolicy}) for more details.
        \item While we encourage the release of code and data, we understand that this might not be possible, so “No” is an acceptable answer. Papers cannot be rejected simply for not including code, unless this is central to the contribution (e.g., for a new open-source benchmark).
        \item The instructions should contain the exact command and environment needed to run to reproduce the results. See the NeurIPS code and data submission guidelines (\url{https://nips.cc/public/guides/CodeSubmissionPolicy}) for more details.
        \item The authors should provide instructions on data access and preparation, including how to access the raw data, preprocessed data, intermediate data, and generated data, etc.
        \item The authors should provide scripts to reproduce all experimental results for the new proposed method and baselines. If only a subset of experiments are reproducible, they should state which ones are omitted from the script and why.
        \item At submission time, to preserve anonymity, the authors should release anonymized versions (if applicable).
        \item Providing as much information as possible in supplemental material (appended to the paper) is recommended, but including URLs to data and code is permitted.
    \end{itemize}

\item {\bf Experimental setting/details}
    \item[] Question: Does the paper specify all the training and test details (e.g., data splits, hyperparameters, how they were chosen, type of optimizer, etc.) necessary to understand the results?
    \item[] Answer: \answerYes{} % Replace by \answerYes{}, \answerNo{}, or \answerNA{}.
    \item[] Justification: Where relevant, we include essential details necessary to understand the presented results and the rationale behind them. As we provide the complete code, we have omitted additional implementation specifics from the main text.
    \item[] Guidelines:
    \begin{itemize}
        \item The answer NA means that the paper does not include experiments.
        \item The experimental setting should be presented in the core of the paper to a level of detail that is necessary to appreciate the results and make sense of them.
        \item The full details can be provided either with the code, in appendix, or as supplemental material.
    \end{itemize}

\item {\bf Experiment statistical significance}
    \item[] Question: Does the paper report error bars suitably and correctly defined or other appropriate information about the statistical significance of the experiments?
    \item[] Answer: \answerYes{} % Replace by \answerYes{}, \answerNo{}, or \answerNA{}.
    \item[] Justification: Where appropriate, we present the full distribution of results rather than relying solely on summary statistics. When summary statistics are employed, we include error bars to convey variability, reporting the median and median absolute deviation (MAD) to summarize and compare results, particularly given the long-tailed distribution of some results.
    \item[] Guidelines:
    \begin{itemize}
        \item The answer NA means that the paper does not include experiments.
        \item The authors should answer "Yes" if the results are accompanied by error bars, confidence intervals, or statistical significance tests, at least for the experiments that support the main claims of the paper.
        \item The factors of variability that the error bars are capturing should be clearly stated (for example, train/test split, initialization, random drawing of some parameter, or overall run with given experimental conditions).
        \item The method for calculating the error bars should be explained (closed form formula, call to a library function, bootstrap, etc.)
        \item The assumptions made should be given (e.g., Normally distributed errors).
        \item It should be clear whether the error bar is the standard deviation or the standard error of the mean.
        \item It is OK to report 1-sigma error bars, but one should state it. The authors should preferably report a 2-sigma error bar than state that they have a 96\% CI, if the hypothesis of Normality of errors is not verified.
        \item For asymmetric distributions, the authors should be careful not to show in tables or figures symmetric error bars that would yield results that are out of range (e.g. negative error rates).
        \item If error bars are reported in tables or plots, The authors should explain in the text how they were calculated and reference the corresponding figures or tables in the text.
    \end{itemize}

\item {\bf Experiments compute resources}
    \item[] Question: For each experiment, does the paper provide sufficient information on the computer resources (type of compute workers, memory, time of execution) needed to reproduce the experiments?
    \item[] Answer: \answerYes{} % Replace by \answerYes{}, \answerNo{}, or \answerNA{}.
    \item[] Justification: We provide a computational complexity analysis of the proposed approach, detailing the computer resources utilized and demonstrating its scalability.
    \item[] Guidelines:
    \begin{itemize}
        \item The answer NA means that the paper does not include experiments.
        \item The paper should indicate the type of compute workers CPU or GPU, internal cluster, or cloud provider, including relevant memory and storage.
        \item The paper should provide the amount of compute required for each of the individual experimental runs as well as estimate the total compute. 
        \item The paper should disclose whether the full research project required more compute than the experiments reported in the paper (e.g., preliminary or failed experiments that didn't make it into the paper). 
    \end{itemize}
    
\item {\bf Code of ethics}
    \item[] Question: Does the research conducted in the paper conform, in every respect, with the NeurIPS Code of Ethics \url{https://neurips.cc/public/EthicsGuidelines}?
    \item[] Answer: \answerYes{} % Replace by \answerYes{}, \answerNo{}, or \answerNA{}.
    \item[] Justification: We have carefully considered all relevant factors outlined in the NeurIPS ethics guidelines throughout the paper.
    \item[] Guidelines:
    \begin{itemize}
        \item The answer NA means that the authors have not reviewed the NeurIPS Code of Ethics.
        \item If the authors answer No, they should explain the special circumstances that require a deviation from the Code of Ethics.
        \item The authors should make sure to preserve anonymity (e.g., if there is a special consideration due to laws or regulations in their jurisdiction).
    \end{itemize}

\item {\bf Broader impacts}
    \item[] Question: Does the paper discuss both potential positive societal impacts and negative societal impacts of the work performed?
    \item[] Answer: \answerYes{} % Replace by \answerYes{}, \answerNo{}, or \answerNA{}.
    \item[] Justification: We establish the positive societal impacts of this work by emphasizing its relevance from a security perspective. Given the significant potential for harm posed by backdoor attacks, advancing research in this area is critical to mitigating such threats and enhancing overall system security.
    \item[] Guidelines:
    \begin{itemize}
        \item The answer NA means that there is no societal impact of the work performed.
        \item If the authors answer NA or No, they should explain why their work has no societal impact or why the paper does not address societal impact.
        \item Examples of negative societal impacts include potential malicious or unintended uses (e.g., disinformation, generating fake profiles, surveillance), fairness considerations (e.g., deployment of technologies that could make decisions that unfairly impact specific groups), privacy considerations, and security considerations.
        \item The conference expects that many papers will be foundational research and not tied to particular applications, let alone deployments. However, if there is a direct path to any negative applications, the authors should point it out. For example, it is legitimate to point out that an improvement in the quality of generative models could be used to generate deepfakes for disinformation. On the other hand, it is not needed to point out that a generic algorithm for optimizing neural networks could enable people to train models that generate Deepfakes faster.
        \item The authors should consider possible harms that could arise when the technology is being used as intended and functioning correctly, harms that could arise when the technology is being used as intended but gives incorrect results, and harms following from (intentional or unintentional) misuse of the technology.
        \item If there are negative societal impacts, the authors could also discuss possible mitigation strategies (e.g., gated release of models, providing defenses in addition to attacks, mechanisms for monitoring misuse, mechanisms to monitor how a system learns from feedback over time, improving the efficiency and accessibility of ML).
    \end{itemize}
    
\item {\bf Safeguards}
    \item[] Question: Does the paper describe safeguards that have been put in place for responsible release of data or models that have a high risk for misuse (e.g., pretrained language models, image generators, or scraped datasets)?
    \item[] Answer: \answerNA{} % Replace by \answerYes{}, \answerNo{}, or \answerNA{}.
    \item[] Justification: In this work, we do not release any data or model that presents a high risk of misuse.
    \item[] Guidelines:
    \begin{itemize}
        \item The answer NA means that the paper poses no such risks.
        \item Released models that have a high risk for misuse or dual-use should be released with necessary safeguards to allow for controlled use of the model, for example by requiring that users adhere to usage guidelines or restrictions to access the model or implementing safety filters. 
        \item Datasets that have been scraped from the Internet could pose safety risks. The authors should describe how they avoided releasing unsafe images.
        \item We recognize that providing effective safeguards is challenging, and many papers do not require this, but we encourage authors to take this into account and make a best faith effort.
    \end{itemize}

\item {\bf Licenses for existing assets}
    \item[] Question: Are the creators or original owners of assets (e.g., code, data, models), used in the paper, properly credited and are the license and terms of use explicitly mentioned and properly respected?
    \item[] Answer: \answerYes{} % Replace by \answerYes{}, \answerNo{}, or \answerNA{}.
    \item[] Justification: We acknowledge the contributions of the original authors of the BackdoorBench and the developers of the utilized benchmarking methodology, as detailed in section~\ref{evaluation}.
    \item[] Guidelines:
    \begin{itemize}
        \item The answer NA means that the paper does not use existing assets.
        \item The authors should cite the original paper that produced the code package or dataset.
        \item The authors should state which version of the asset is used and, if possible, include a URL.
        \item The name of the license (e.g., CC-BY 4.0) should be included for each asset.
        \item For scraped data from a particular source (e.g., website), the copyright and terms of service of that source should be provided.
        \item If assets are released, the license, copyright information, and terms of use in the package should be provided. For popular datasets, \url{paperswithcode.com/datasets} has curated licenses for some datasets. Their licensing guide can help determine the license of a dataset.
        \item For existing datasets that are re-packaged, both the original license and the license of the derived asset (if it has changed) should be provided.
        \item If this information is not available online, the authors are encouraged to reach out to the asset's creators.
    \end{itemize}

\item {\bf New assets}
    \item[] Question: Are new assets introduced in the paper well documented and is the documentation provided alongside the assets?
    \item[] Answer: \answerYes{} % Replace by \answerYes{}, \answerNo{}, or \answerNA{}.
    \item[] Justification: We provide the complete implementation of the proposed approach, including all necessary code, configuration files, and documentation. The README file includes detailed instructions on running the experiments, reproducing the results, and understanding the data processing pipeline. Additionally, we clearly specify the datasets usage, model configurations, and any other relevant information to ensure reproducibility and transparency.
    \item[] Guidelines:
    \begin{itemize}
        \item The answer NA means that the paper does not release new assets.
        \item Researchers should communicate the details of the dataset/code/model as part of their submissions via structured templates. This includes details about training, license, limitations, etc. 
        \item The paper should discuss whether and how consent was obtained from people whose asset is used.
        \item At submission time, remember to anonymize your assets (if applicable). You can either create an anonymized URL or include an anonymized zip file.
    \end{itemize}

\item {\bf Crowdsourcing and research with human subjects}
    \item[] Question: For crowdsourcing experiments and research with human subjects, does the paper include the full text of instructions given to participants and screenshots, if applicable, as well as details about compensation (if any)? 
    \item[] Answer: \answerNA{} % Replace by \answerYes{}, \answerNo{}, or \answerNA{}.
    \item[] Justification: This work does not involve any crowdsourcing experiment or research involving human subjects.
    \item[] Guidelines:
    \begin{itemize}
        \item The answer NA means that the paper does not involve crowdsourcing nor research with human subjects.
        \item Including this information in the supplemental material is fine, but if the main contribution of the paper involves human subjects, then as much detail as possible should be included in the main paper. 
        \item According to the NeurIPS Code of Ethics, workers involved in data collection, curation, or other labor should be paid at least the minimum wage in the country of the data collector. 
    \end{itemize}

\item {\bf Institutional review board (IRB) approvals or equivalent for research with human subjects}
    \item[] Question: Does the paper describe potential risks incurred by study participants, whether such risks were disclosed to the subjects, and whether Institutional Review Board (IRB) approvals (or an equivalent approval/review based on the requirements of your country or institution) were obtained?
    \item[] Answer: \answerNA{} % Replace by \answerYes{}, \answerNo{}, or \answerNA{}.
%    \item[] Justification: \justificationTODO{}
    \item[] Guidelines:
    \begin{itemize}
        \item The answer NA means that the paper does not involve crowdsourcing nor research with human subjects.
        \item Depending on the country in which research is conducted, IRB approval (or equivalent) may be required for any human subjects research. If you obtained IRB approval, you should clearly state this in the paper. 
        \item We recognize that the procedures for this may vary significantly between institutions and locations, and we expect authors to adhere to the NeurIPS Code of Ethics and the guidelines for their institution. 
        \item For initial submissions, do not include any information that would break anonymity (if applicable), such as the institution conducting the review.
    \end{itemize}

\item {\bf Declaration of LLM usage}
    \item[] Question: Does the paper describe the usage of LLMs if it is an important, original, or non-standard component of the core methods in this research? Note that if the LLM is used only for writing, editing, or formatting purposes and does not impact the core methodology, scientific rigorousness, or originality of the research, declaration is not required.
    %this research? 
    \item[] Answer: \answerNA{} % Replace by \answerYes{}, \answerNo{}, or \answerNA{}.
    \item[] Justification: The core contributions of this research do not involve the use of LLMs as essential, original, or non-standard components. The methodology and experimental framework are independent of LLMs, and any usage of LLMs was limited to writing, editing, or formatting, which does not impact the scientific rigor or originality of the work.
    \item[] Guidelines:
    \begin{itemize}
        \item The answer NA means that the core method development in this research does not involve LLMs as any important, original, or non-standard components.
        \item Please refer to our LLM policy (\url{https://neurips.cc/Conferences/2025/LLM}) for what should or should not be described.
    \end{itemize}

\end{enumerate}

%%%%%%%%%%%%%%%%%%%%%%%%%%%%%%%%%%%%%%%%%%%%%%%%%%%%%%%%%%%%

\clearpage
\newpage
\appendix

%\section{Technical Appendices and Supplementary Material}

\section{Invertible Mask and Selection Dynamics} \label{appendix:mask}

\subsection{Theoretical Analysis} \label{appendix:analysis}

Below, we provide a theoretical analysis demonstrating that, in the limit $\mathbf{s}\to 0$ and $k\to\infty$, the learned mask $\mathbf{a}'$ and its inverse $\bar{\mathbf{a}}'$ converge to a binary partition (i.e., they become exact complements), thereby ensuring the invertibility of the masking operation.

\begin{lemma}
  Let \(\mathbf{a}\in[0,1]^N\), \(\mathbf{s}\in[0,1]^N\), \(k>0\), and define the learned mask and its inverse as
  \begin{align*}
  \mathbf{a}'&=\sigma\bigl(k[\mathbf{a}-0.5]\bigr)+\mathbf{s}\circ\sigma\bigl(k[(1-\mathbf{a})-0.5]\bigr)\\
  \bar{\mathbf{a}}'&=\sigma\bigl(k[(1-\mathbf{a})-0.5]\bigr)+\mathbf{s}\circ\sigma\bigl(k[\mathbf{a}-0.5]\bigr),
  \end{align*}
  where \(\sigma(x)=1/(1+e^{-x})\) denotes the sigmoid function. Then, we have 
  \(\bigl|\mathbf{a}'+\bar{\mathbf{a}}'-1\bigr| \le 2\mathbf{s}.\)
  Consequently, in the limit \(\mathbf{s}\to 0\), we have
  \(\lim_{\mathbf{s}\to0}\left(\mathbf{a}'+\bar{\mathbf{a}}'\right) = 1.\)
  Moreover, in the joint limit \(\mathbf{s}\to 0\), \(k\to\infty\), we obtain
  \[\mathbf{a}'\to\mathbbm{1}\{\mathbf{a}>0.5\}, \quad \bar{\mathbf{a}}'\to\mathbbm{1}\{\mathbf{a}<0.5\},\]
  where \(\mathbbm{1}\{\}\) denotes the indicator function. In other words, the mask \(\mathbf{a}'\) becomes a hard thresholding function and the inverse mask \(\bar{\mathbf{a}}'\) its exact binary complement.
\end{lemma}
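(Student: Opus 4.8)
The plan is to work entirely component-wise, since every operation in the definitions of $\mathbf{a}'$ and $\bar{\mathbf{a}}'$ acts element-wise. Fix an index $i$ and write $a = a_i \in [0,1]$, $s = s_i \in [0,1]$. The key algebraic observation is that the two sigmoid terms appearing across $\mathbf{a}'$ and $\bar{\mathbf{a}}'$ are $\sigma(k[a-0.5])$ and $\sigma(k[(1-a)-0.5]) = \sigma(k[0.5-a]) = \sigma(-k[a-0.5])$. By the standard identity $\sigma(x) + \sigma(-x) = 1$, these two quantities sum to exactly $1$. Denote $u = \sigma(k[a-0.5])$ and $v = \sigma(-k[a-0.5]) = 1-u$. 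Then $a'_i = u + s v$ and $\bar a'_i = v + s u$, so $a'_i + \bar a'_i = (u+v) + s(u+v) = 1 + s$. Hence $a'_i + \bar a'_i - 1 = s_i$, which immediately gives $|a'_i + \bar a'_i - 1| = s_i \le 2 s_i$; collecting over all $i$ yields the stated bound $|\mathbf{a}' + \bar{\mathbf{a}}' - 1| \le 2\mathbf{s}$. (In fact the tighter bound with $\mathbf{s}$ in place of $2\mathbf{s}$ holds, but $2\mathbf{s}$ suffices.)

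For the first limit, taking $\mathbf{s} \to 0$ in the identity $\mathbf{a}' + \bar{\mathbf{a}}' = 1 + \mathbf{s}$ gives $\lim_{\mathbf{s}\to 0}(\mathbf{a}' + \bar{\mathbf{a}}') = 1$ directly; alternatively the squeeze theorem applied to $|\mathbf{a}' + \bar{\mathbf{a}}' - 1| \le 2\mathbf{s}$ does the same. For the joint limit, I would first set $\mathbf{s} = 0$ so that $a'_i = \sigma(k[a_i - 0.5])$ and $\bar a'_i = \sigma(-k[a_i - 0.5])$, then analyze $k \to \infty$ by cases on the sign of $a_i - 0.5$: if $a_i > 0.5$ then $k[a_i - 0.5] \to +\infty$ so $\sigma(k[a_i-0.5]) \to 1$ and $\sigma(-k[a_i-0.5]) \to 0$; if $a_i < 0.5$ the roles reverse. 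This is exactly the statement $\mathbf{a}' \to \mathbbm{1}\{\mathbf{a} > 0.5\}$ and $\bar{\mathbf{a}}' \to \mathbbm{1}\{\mathbf{a} < 0.5\}$. One should note the edge case $a_i = 0.5$, where $\sigma(0) = 1/2$ for every $k$, so the limit is $1/2$ rather than a $0/1$ value; this is a measure-zero case and consistent with the indicator convention (strict inequalities), but it is worth a one-line remark. If one wants the joint limit without fixing $\mathbf{s}=0$ first, the perturbation term is bounded by $s_i \to 0$ uniformly in $k$, so it is harmless.

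I do not anticipate a genuine obstacle here: the whole lemma reduces to the reflection identity $\sigma(x)+\sigma(-x)=1$ plus an elementary limit of the sigmoid at $\pm\infty$. The only points requiring minor care are (i) making the component-wise-to-vector passage explicit (stating that inequalities and limits on vectors are meant entrywise), and (ii) handling the threshold point $a_i = 0.5$ cleanly. I would present the argument as: Step 1, record $\sigma(x)+\sigma(-x)=1$ and rewrite the two inner sigmoids; Step 2, derive the exact identity $\mathbf{a}'+\bar{\mathbf{a}}' = \mathbf{1}+\mathbf{s}$ and read off the bound; Step 3, take $\mathbf{s}\to 0$; Step 4, take $k\to\infty$ by sign cases, remarking on $a_i=0.5$.
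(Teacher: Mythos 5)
Your proposal is correct, and it is in fact slightly sharper than the paper's own argument. Both proofs hinge on the same reflection identity $\sigma(x)+\sigma(-x)=1$ applied to the two inner sigmoids, but you exploit it differently at the key step: you compute the sum exactly, $\mathbf{a}'+\bar{\mathbf{a}}' = (u+v) + \mathbf{s}\circ(u+v) = 1+\mathbf{s}$, from which the bound, the $\mathbf{s}\to 0$ limit, and indeed the tighter bound $|\mathbf{a}'+\bar{\mathbf{a}}'-1| = \mathbf{s} \le 2\mathbf{s}$ all fall out immediately. The paper instead first treats the base case $\mathbf{s}=0$ (where the sum is exactly $1$) and then, for $\mathbf{s}>0$, bounds the deviation of each of $\mathbf{a}'$ and $\bar{\mathbf{a}}'$ from its $\mathbf{s}=0$ value by $\mathbf{s}$ and applies the triangle inequality, which is why it only obtains the constant $2$. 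Your exact identity buys a one-line derivation and explains why the stated bound is not tight; the paper's perturbation argument is marginally more robust in style but weaker. The $k\to\infty$ step is the same sign-case analysis of the saturating sigmoid in both; your explicit remark on the threshold point $a_i = 0.5$ (where $\sigma(0)=1/2$ for every $k$, so neither indicator value is attained) is more careful than the paper, whose stated sigmoid limit assigns the value $0$ to the case $a_i \le 0.5$ and thus glosses over exactly this point. Your two housekeeping notes (entrywise interpretation of vector inequalities and limits, and the uniformity in $k$ of the $\mathbf{s}$-perturbation if one takes the joint limit without setting $\mathbf{s}=0$ first) are both appropriate and close the only small gaps one could raise.
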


\begin{proof}
  First, consider the case \(\mathbf{s}=0\). In this setting, the learned mask and its inverse simplify to \(\mathbf{a}'=\sigma\bigl(k[\mathbf{a}-0.5]\bigr)\) and \(\bar{\mathbf{a}}'=\sigma\bigl(k[(1-\mathbf{a})-0.5]\bigr)=\sigma\bigl(-k[\mathbf{a}-0.5]\bigr)=1-\sigma\bigl(k[\mathbf{a}-0.5]\bigr)\). Hence, for \( \mathbf{s} = 0 \), we have
    \[\mathbf{a}' + \bar{\mathbf{a}}' = \sigma(k[\mathbf{a} - 0.5]) + \bigl[1 - \sigma(k[\mathbf{a} - 0.5])\bigr] = 1.\]
  Now, for \(\mathbf{s}>0\), we observe that \(\bigl|\mathbf{a}'-\sigma(k[\mathbf{a}-0.5])\bigr|\le\mathbf{s}\) and \(\bigl|\bar{\mathbf{a}}'-\sigma(-k[\mathbf{a}-0.5])\bigr|\le\mathbf{s}\). Thus, the sum can be bounded as
  \[\bigl|\mathbf{a}' + \bar{\mathbf{a}}' - 1\bigr| 
  = \bigl| \bigl[\mathbf{a}' - \sigma(k[\mathbf{a} - 0.5])\bigr] + \bigl[\bar{\mathbf{a}}' - \sigma(-k[\mathbf{a} - 0.5])\bigr] \bigr|
  \leq 2 \mathbf{s}.\]
  Therefore, as \( \mathbf{s} \to 0 \), we recover the exact complementarity: \(\lim_{\mathbf{s} \to 0}\bigl(\mathbf{a}' + \bar{\mathbf{a}}'\bigr) = 1\).
  Finally, consider the limit \( k \to \infty \). For any entry \(a\) of the mask \(\mathbf{a}\), the sigmoid function satisfies the following asymptotic behavior as \(k\to \infty\):
  \[\lim_{k \to \infty} \sigma\bigl(k[{a} - 0.5]\bigr) =
  \begin{cases}
    1 & \text{if } {a} > 0.5, \\
    0 & \text{if } {a} \leq 0.5.
  \end{cases}\]
  Hence, we obtain
  \[\mathbf{a}' \to \mathbbm{1}\{\mathbf{a} > 0.5\}, \quad \bar{\mathbf{a}}' \to \mathbbm{1}\{\mathbf{a} < 0.5\}.\]
  Therefore, in the joint limit \( \mathbf{s} \to 0 \), \( k \to \infty \), the mask \( \mathbf{a}' \) converges to a hard threshold at \( 0.5 \), and \( \bar{\mathbf{a}}' \) converges to its exact binary complement. This completes the proof.
\end{proof}

\subsection{Empirical Evaluation}

In Fig.~\ref{fig:mask_heatplot}, we examine the relationship between the mask and its inverse ($\mathbf{a}'$ and $\bar{\mathbf{a}}'$) across different values of $\mathbf{a}$ and $\mathbf{s}$ to validate the above result. Note that when an element of $\mathbf{a}'$ or $\bar{\mathbf{a}}'$ is approximately zero, it is considered pruned. These heat plots illustrate that the mask prunes the selected components ($\mathbf{s} \approx 0$) when $\mathbf{a} < 0.5$, whereas the inverse mask prunes them when $\mathbf{a} > 0.5$. In contrast, the unselected components ($\mathbf{s} \approx 1$) remain unaffected by $\mathbf{a}$ and are thus preserved by both the mask and its inverse. As discussed in Section~\ref{prelim}, these unselected components constitute a shared backbone retained across both pruning perspectives. This analysis supports the theoretical results presented above.

\begin{figure}[h]
    \centering
    \includegraphics[width=0.65\linewidth]{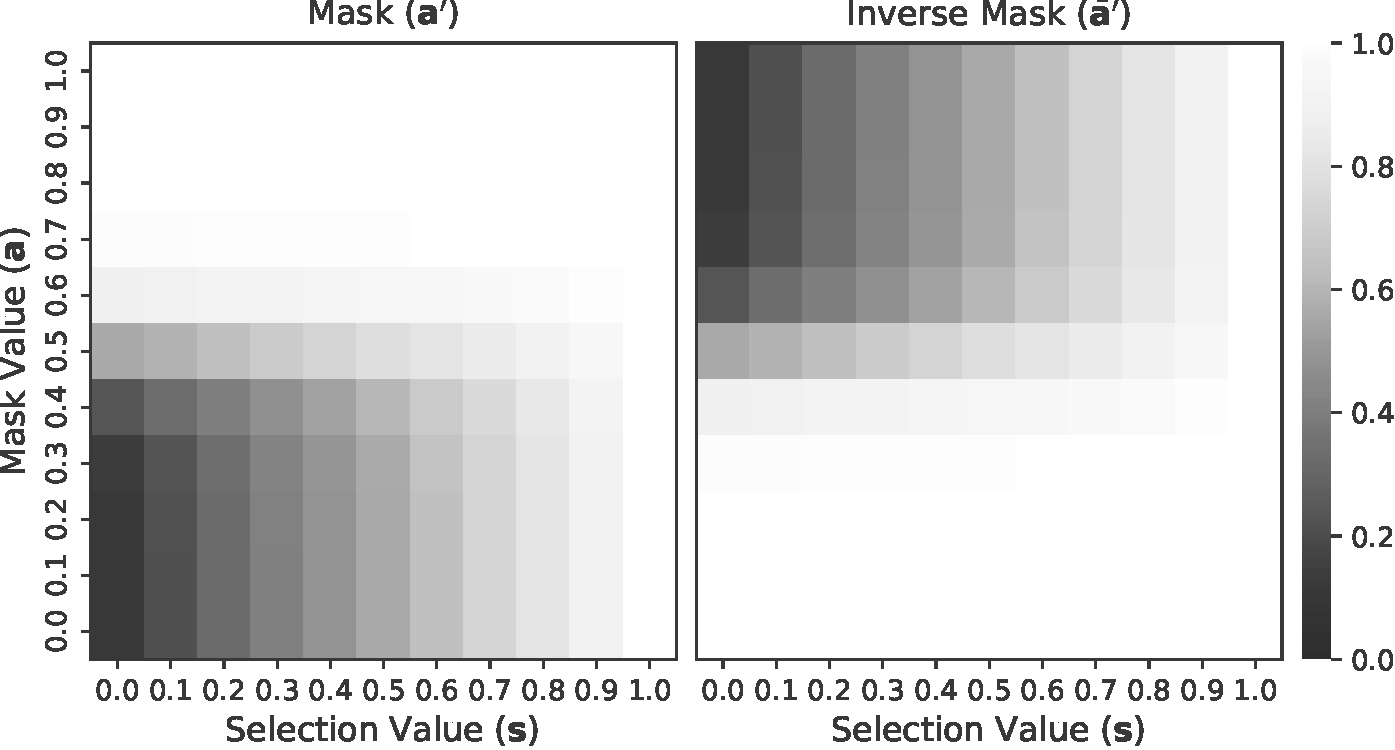}
    \caption{Values of $\mathbf{a}'$ and $\bar{\mathbf{a}}'$ as functions of $\mathbf{a}$ and $\mathbf{s}$ for $k=20$. The boundaries $\mathbf{s} > 0.5$ and $\mathbf{s} < 0.5$ delineate components as shared and selected, respectively.}
    \label{fig:mask_heatplot}
\end{figure}

\newpage

\section{Algorithms} \label{appendix:algorithms}

In Algorithm 1, we summarize the training procedure of IMS.

\begin{algorithm}
\caption{Training of IMS}
\begin{algorithmic}[1]
\Require Dataset $\mathcal{D} = \{x_i\}_{i=1}^N$, mask parameters $\mathcal{A}$ and $\mathcal{S}$, step size $\eta$, round $R_{1,2,3}$, and bound $\epsilon$.

    \For{$i = 1$ to $R_1$} \:\: \# Mask Initialisation
        \State Sample minibatch $(x, y) \subset \mathcal{D}$
        \State Compute $\mathbf{g}_{\mathcal{A}}$ and $\mathbf{g}_{\mathcal{S}}$, the gradients of the objective in~\eqref{initial-problem} w.r.t. $\mathcal{A}$ and $\mathcal{S}$
        \State $\mathcal{A}, \mathcal{S} \gets \text{AdamW}(\mathcal{A}, \mathcal{S}, \mathbf{g}_{\mathcal{A}}, \mathbf{g}_{\mathcal{S}}, \eta)$
        \State $\mathcal{A} \gets \text{clip}(\mathcal{A}, 0, 1)$
        \State $\mathcal{S} \gets \text{clip}(\mathcal{A}, 0, 1)$
    \EndFor

    \For{$i = 1$ to $R_2$} \:\: \# Outer Subproblem
        \State Sample minibatch $(x, y) \subset \mathcal{D}$
        \State $\delta \gets \text{zeros\_like}(x)$
        \For{$i = 1$ to $R_3$} \:\: \# Inner Subproblem
            \State Compute $\mathbf{g}_\delta$, the gradient of the objective in~\eqref{inner-problem} w.r.t. $\delta$
            \State $\delta \gets \text{AdamW}(\delta, \mathbf{g}_{\delta}, \eta)$
            \State $\delta \gets \text{clip}(\delta, -\epsilon, \epsilon)$
        \EndFor
    
        \State Compute $\mathbf{g}_{\mathcal{A}}$ and $\mathbf{g}_{\mathcal{S}}$, the gradients of the objective in~\eqref{outer-problem} w.r.t. $\mathcal{A}$ and $\mathcal{S}$
        \State $\mathcal{A}, \mathcal{S} \gets \text{AdamW}(\mathcal{A}, \mathcal{S}, \mathbf{g}_{\mathcal{A}}, \mathbf{g}_{\mathcal{S}}, \eta)$
        \State $\mathcal{A} \gets \text{clip}(\mathcal{A}, 0, 1)$
        \State $\mathcal{S} \gets \text{clip}(\mathcal{A}, 0, 1)$
    \EndFor
\end{algorithmic}
\end{algorithm}

\section{Performance Measures} \label{appendix:evaluation_metrics}

To evaluate the effectiveness of backdoor mitigation, we adopt three performance measures: accuracy reduction ratio (ARR), attack success rate (ASR), and recovery difference ratio (RDR).

ASR assesses the efficacy of the backdoor attack by computing the accuracy on backdoor test samples, which are clean inputs modified with the trigger and labeled as the target class. Samples originally belonging to the target class are excluded from ASR computation.

ARR quantifies the impact of mitigation on clean accuracy, defined as:
\begin{equation} \label{arr-equation}
%\mathrm{ARR} = \frac{\alpha_{\mathrm{pre}} - \alpha_{\mathrm{post}}}{\alpha_{\mathrm{pre}}},
\mathrm{ARR} = 1- \frac{\alpha_s}{\alpha_p},
\end{equation}
where $\alpha_p$ and $\alpha_s$ denote clean accuracy before and after mitigation, respectively. The normalization by $\alpha_p$ accounts for baseline accuracy differences between datasets, such as the lower baseline accuracy typically observed in Tiny-ImageNet compared to CIFAR-10. A lower ARR (closer to zero) indicates minimal degradation in clean accuracy due to mitigation.

RDR evaluates the effectiveness of mitigation in restoring backdoor-affected samples to their correct labels, defined as:
\begin{equation} \label{rdr-equation}
%\mathrm{RDR} = \frac{\alpha_{\mathrm{pre}} - \eta_{\mathrm{post}}}{\alpha_{\mathrm{pre}}},
\mathrm{RDR} = 1 - \frac{\eta_s}{\alpha_p},
\end{equation}
where $\eta_s$ represents recovery accuracy after mitigation. Similar to ARR, RDR is normalized by the pre-mitigation clean accuracy. An ideal mitigation strategy would result in an RDR close to zero, indicating that the model effectively restores correct classification even for inputs previously manipulated by a trigger.

\section{Computational Complexity} \label{appendix:complexity}

In Fig.~\ref{fig:complexity}, we analyze the computational complexity of IMS across various datasets and SPC settings. The first row represents the median number of mask initialisation (blue) and outer subproblem (orange) optimization rounds required for convergence, with the error bars indicating variability based on the median absolute deviation. The second row illustrates the median time complexity of a single mask initialisation (blue), inner subproblem (green), and outer-subproblem (orange) round, plotted on the logarithmic scale. The third row shows the median total runtime of each simulation. All experiments were run on a 4-core CPU with 32GB of RAM and a H100 GPU. The key takeaway from this analysis is that IMS is scalable to larger datasets. While increased data availability generally raises computational complexity, beyond a certain point, the increase in complexity is offset by a reduction in the number of rounds needed for convergence. Notably, in the low SPC settings of CIFAR-10, small batch sizes result in substantial computational savings. Moreover, the primary computational cost is incurred during the estimation of perturbations in each step. The complexity trends for GTSRB and Tiny-ImageNet suggest that doubling the image size results in approximately a threefold increase in runtime, yet this does not hinder the applicability of IMS to ImageNet-sized datasets.

\begin{figure}[h]
    \centering
    \includegraphics[width=\linewidth]{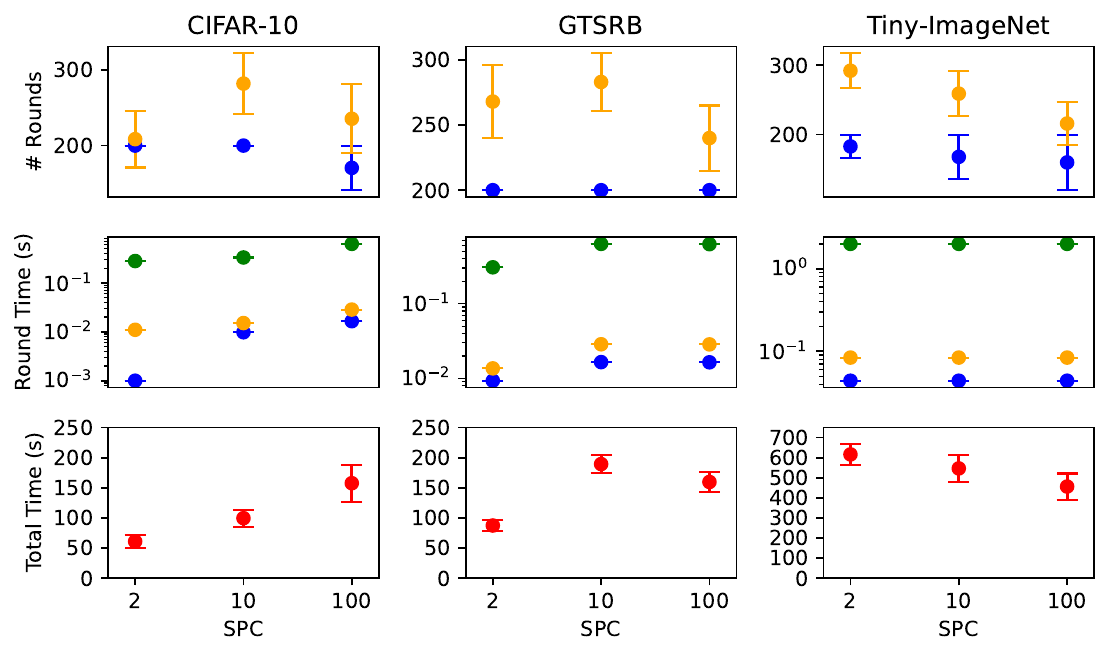}
    \caption{Median number of optimization rounds, per-round time, and total time complexity of IMS across three datasets. Error bars represent variability based on median absolute deviation. Blue: Mask Initialisation, Green: Inner Subproblem, Orange: Outer Subproblem, and Red: Total.}
    \label{fig:complexity}
\end{figure}

\section{ImageNet Subset} \label{appendix:imagenet}

To assess the scalability of IMS to real-world datasets with larger image sizes, we evaluate its performance on ImageNette, a subset of ImageNet. In Table~\ref{tab:imagenette_results}, we compare the median ARR, ASR, and RDR performance of IMS on ImageNette using MobileNet with those obtained on the other datasets considered in Section~\ref{evaluation}. The results indicate that IMS effectively scales to larger image tasks, as its ARR, ASR, and RDR performance on ImageNette closely aligns with that observed across the other three datasets.%, except for the BadNet attack.

\begin{table}[th]
\caption{Median ARR, ASR, and RDR performance of IMS (x100) across different datasets.}% A: CIFAR-10, B: GTSRB, C: Tiny-ImageNet, D: ImageNette.}
\scalebox{0.9}{
\begin{tabular}{cc|rrrr|rrrr|rrrr}
\toprule
\multirow{2}{*}{Attack} & \multirow{2}{*}{SCP} & \multicolumn{4}{c|}{ARR} & \multicolumn{4}{c|}{ASR} & \multicolumn{4}{c}{RDR} \\
% & & A & B & C & D & A & B & C & D & A & B & C & D \\
 & & \begin{turn}{-90}CIFAR-10\end{turn} & \begin{turn}{-90}GTSRB\end{turn} & \begin{turn}{-90}Tiny-ImageNet\end{turn} & \begin{turn}{-90}ImageNette\end{turn}
 & \begin{turn}{-90}CIFAR-10\end{turn} & \begin{turn}{-90}GTSRB\end{turn} & \begin{turn}{-90}Tiny-ImageNet\end{turn} & \begin{turn}{-90}ImageNette\end{turn}
 & \begin{turn}{-90}CIFAR-10\end{turn} & \begin{turn}{-90}GTSRB\end{turn} & \begin{turn}{-90}Tiny-ImageNet\end{turn} & \begin{turn}{-90}ImageNette\end{turn} \\
\midrule
\multirow{3}{*}{BadNet} & 1 & 24.6 & 16.4 & 37.7 & 17.9 & 16.1 & 2.5 & 14.6 & 92.4 & 40.9 & 19.4 & 48.9 & 92.3 \\
 & 10 & 21.7 & 12.9 & 9.4 & 14.1 & 14.3 & 0.9 & 11.0 & 90.1 & 31.6 & 14.4 & 21.0 & 89.4 \\
 & 100 & 6.4 & 7.7 & 4.2 & 7.6 & 2.4 & 0.5 & 9.5 & 63.0 & 7.8 & 8.6 & 15.3 & 60.0 \\ [0.2ex]
 \hline
\multirow{3}{*}{BPP} & 1 & 27.0 & 13.9 & 12.6 & 12.6 & 2.4 & 0.4 & 0.3 & 4.7 & 28.0 & 17.7 & 15.3 & 16.1 \\
 & 10 & 14.3 & 6.6 & 0.0 & 8.3 & 2.8 & 0.2 & 0.3 & 3.1 & 16.3 & 11.9 & 2.9 & 10.2 \\
 & 100 & 3.2 & 4.0 & 0.0 & 2.6 & 2.0 & 0.2 & 0.4 & 1.9 & 5.8 & 9.2 & 1.4 & 4.1 \\ [0.2ex]
 \hline
\multirow{3}{*}{Inputaware} & 1 & 29.5 & 17.4 & 14.6 & 13.5 & 9.7 & 0.2 & 0.7 & 2.3 & 39.7 & 24.4 & 32.3 & 19.6 \\
 & 10 & 15.0 & 8.4 & 2.9 & 7.0 & 7.5 & 0.2 & 1.0 & 1.1 & 27.7 & 16.1 & 20.1 & 12.8 \\
 & 100 & 4.9 & 3.7 & 0.2 & 1.7 & 9.3 & 0.1 & 1.0 & 0.9 & 18.8 & 11.1 & 20.4 & 6.1 \\
\bottomrule
\end{tabular}}
\label{tab:imagenette_results}
\end{table}

\section{Mask Initialization}\label{appendix:mask-init}
As part of IMS, we propose two main phases as part of our optimization framework. As mentioned in Section~\ref{optimisation-framework}, our framework first initializes the mask, and then alternates between solving an inner and outer subproblem. To show the interdependence between these two phases, we compare the performance of IMS with and without mask intialization. This analysis uses the ResNet architecture, the GTSRB dataset, with a poisoning ratio of 10\% and SPC of 10. The results, shown in Table~\ref{tab:ims-phase-compare}, demonstrate that including mask initialization substantially improves ASR and RDR performance of IMS under IAB and WaNet attacks, offers modest gains in RDR performance under SSBA attack, and has a negligible effect on performance under BadNet and BPP attacks. In general, Table~\ref{tab:ims-phase-compare} shows that mask initialization plays a critical role in harder-to-defend attack scenarios, justifying its inclusion.

\begin{table}[h]
    \caption{Comparison of ASR, RDR, and ARR across different attacks when IMS is used with and without mask initialization (w/o Init).}
    \centering
    \begin{tabular}{l l | c c c}
        \toprule
        Attack & Defense & ASR & RDR & ARR \\
        \midrule
        \multirow{2}{*}{IAB}    & IMS w/o Init & 56.1 & 57.3 & 4.5 \\
         & IMS          & 1.1  & 7.1  & 4.7 \\
         \hline
        \multirow{2}{*}{WaNet}  & IMS w/o Init & 21.8 & 21.7 & 0.0 \\
        & IMS          & 1.7  & 1.9  & 0.0 \\
        \hline
        \multirow{2}{*}{SSBA} & IMS w/o Init & 2.3  & 27.5 & 3.0 \\
        & IMS          & 1.0  & 26.0 & 4.0 \\
        \hline
        \multirow{2}{*}{BadNet} & IMS w/o Init & 0.3  & 4.0  & 3.8 \\
        & IMS          & 0.1  & 4.0  & 3.9 \\
        \hline
        \multirow{2}{*}{BPP}    & IMS w/o Init & 0.0  & 7.6  & 1.8 \\
         & IMS          & 0.0  & 7.6  & 1.4 \\
        \bottomrule
    \end{tabular}
    \label{tab:ims-phase-compare}
\end{table}

\section{Impact of $\lambda$}\label{appendix:lambda}
As part of IMS, we use $\lambda$ to weight the $\ell_{1}$-based sparsity term included in Equation~\ref{initial-problem} and \ref{outer-problem}. To evaluate the sensitivity of our approach to the value of $\lambda$, we have conducted experiments across a range of values using the ResNet architecture, the GTSRB dataset with a poisoning ratio of 10\% and SPC of 10. The results, presented in the Table~\ref{tab:lambda}, show that performance remains stable for values of $\lambda$ up to 2x the default value of 10 in most cases. Exceptions being IAB and WaNet, which exhibit more sensitivity. For BadNet and BPP, performance remains robust even when $\lambda$ is increased by a factor of 10.

\begin{table}[h]
\caption{Performance of IMS across varying $\lambda$ values.}
\centering
%\small
\setlength{\tabcolsep}{4pt}
\begin{tabular}{l|c|cccccccc}
\toprule
Attack & Metric & $\lambda{=}5$ & 10 & 15 & 20 & 25 & 50 & 75 & 100 \\
\midrule
\multirow{3}{*}{BadNet} 
 & ASR & 0.0 & 0.0 & 0.0 & 0.0 & 0.1 & 1.3 & 3.5 & 13.5 \\
 & RDR & 3.7 & 4.4 & 4.2 & 6.0 & 4.5 & 4.7 & 6.5 & 14.1 \\
 & ARR & 3.8 & 4.2 & 4.1 & 6.0 & 4.6 & 3.7 & 3.8 & 2.8 \\
\midrule
\multirow{3}{*}{BPP} 
 & ASR & 0.0 & 0.0 & 0.0 & 0.0 & 0.0 & 0.0 & 0.0 & 0.0 \\
 & RDR & 7.6 & 7.6 & 7.1 & 7.4 & 7.7 & 7.6 & 6.9 & 6.4 \\
 & ARR & 1.8 & 1.4 & 1.5 & 1.7 & 1.6 & 1.4 & 1.0 & 0.5 \\
\midrule
\multirow{3}{*}{IAB} 
 & ASR & 0.2 & 1.1 & 52.1 & 87.9 & 54.5 & 51.6 & 86.3 & 73.2 \\
 & RDR & 5.1 & 7.1 & 54.3 & 88.5 & 55.5 & 53.2 & 86.7 & 73.2 \\
 & ARR & 4.2 & 4.7 & 5.6 & 6.5 & 4.1 & 3.9 & 2.5 & 2.8 \\
\midrule
\multirow{3}{*}{SSBA} 
 & ASR & 0.7 & 1.0 & 1.0 & 2.1 & 2.7 & 60.7 & 47.8 & 85.2 \\
 & RDR & 32.2 & 26.0 & 26.1 & 30.4 & 27.4 & 73.0 & 69.4 & 89.8 \\
 & ARR & 4.4 & 4.0 & 4.3 & 4.3 & 2.8 & 3.1 & 2.5 & 2.7 \\
\midrule
\multirow{3}{*}{WaNet} 
 & ASR & 22.9 & 17.2 & 10.1 & 41.8 & 64.1 & 61.1 & 73.0 & 75.8 \\
 & RDR & 22.1 & 16.9 & 10.5 & 41.6 & 63.9 & 61.0 & 73.0 & 75.9 \\
 & ARR & 0.0 & 0.0 & 0.0 & 0.0 & 0.0 & 0.0 & 0.0 & 0.0 \\
\bottomrule
\end{tabular}
\label{tab:lambda}
\end{table}

\section{Impact of $k$}\label{appendix:k}
As part of IMS, we use $k$ to enforce selection. To evaluate the sensitivity of IMS to the value of $k$, we conducted experiments across a range of values using the ResNet architecture, the GTSRB dataset with a poisoning ratio of 10\% and SPC of 10. The results show that performance remains largely stable in the range $10 \leq k \leq 30$. When $k < 10$, binarisation of the selection parameters is less sharp, which reduces pruning specificity and degrades performance. Conversely, when $k > 30$, the steepening of the inflection point amplifies the gradient of the $\ell_1$ regularization term, discouraging selection and limiting pruning effectiveness.

\begin{figure}[h]
    \centering
    \includegraphics[width=\linewidth]{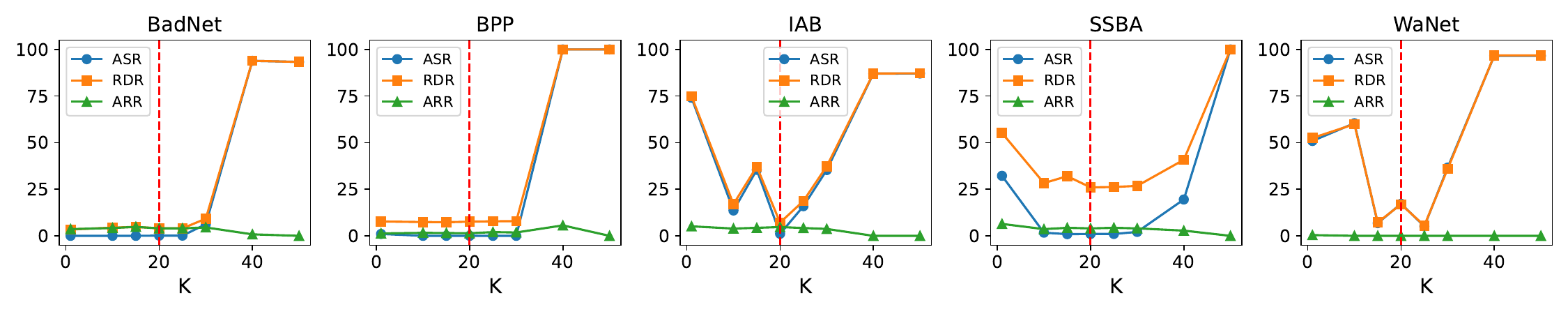}
    \caption{ARR, ASR and RDR performance of IMS for different values of $k$. Red line indicates the chosen value of 20 used in all experiments.}
    \label{fig:k}
\end{figure}

\section{Impact of selection}\label{appendix:pruning-impact}
To empirically examine the trade-off between selection and ARR, we measured the relationship between the effective sparsity of $\mathcal{S}$ and ARR performance across each tested setting. We define effective sparsity as the proportion of entries in $\mathcal{S}$ that fall below 0.5. Using Spearman’s $\rho$, we assessed how sparsity relates to ARR across each dataset–model pair, and also report the observed sparsity ranges. As hypothesised, GTSRB and Tiny ImageNet show a monotonic increasing association between the proportion of selected values and ARR ($\rho > 0.4$ in most cases), which aligns with the corresponding sparsity minima and maxima. In contrast, CIFAR-10 typically exhibits a weaker, monotonic decreasing relationship. Notably, the narrow spread between the minimum and maximum values across all settings suggests that, in absolute terms, the extent of pruning remains modest relative to the total model size. While increasing $\lambda$ could theoretically drive higher sparsity, our analysis in Appendix~\ref{appendix:lambda} shows that large $\lambda$ values substantially harm ASR and RDR, while yielding only marginal gains in ARR.

\begin{table}[]
\caption{Selection statistics across datasets and models. $\rho$ = Spearman's $\rho$, Min/Max = Minimum/Maximimum proportion of values $<0.5$ or $>0.5$ across all tested settings.}
\centering
\begin{tabular}{ll|cccc}
\toprule
Dataset & Model & Significant & $\rho$ & Min & Max \\
\midrule
CIFAR-10 & VGG          & False &  0.053  & 1.07 & 12.52 \\
CIFAR-10 & ResNet       & True  & -0.164  & 1.71 & 13.03 \\
CIFAR-10 & EfficientNet & True  & -0.496  & 0.38 &  2.98 \\
CIFAR-10 & MobileNet    & True  & -0.310  & 0.24 &  2.81 \\
\midrule
GTSRB    & VGG          & True  &  0.705  & 2.31 & 10.65 \\
GTSRB    & ResNet       & True  &  0.384  & 2.59 &  9.85 \\
GTSRB    & EfficientNet & True  & -0.222  & 0.54 &  5.32 \\
GTSRB    & MobileNet    & True  &  0.466  & 0.45 &  5.70 \\
\midrule
Tiny     & VGG          & True  &  0.680  & 2.65 & 12.95 \\
Tiny     & ResNet       & True  &  0.577  & 2.62 & 11.14 \\
Tiny     & EfficientNet & True  &  0.681  & 0.24 &  5.40 \\
Tiny     & MobileNet    & True  &  0.926  & 0.16 &  4.27 \\
\bottomrule
\end{tabular}
\label{tab:selection_mask_results}
\end{table}

\section{Backdoor Reactivation Attack} \label{appendix:reactivation}

The recent work~\cite{zhubreaking} introduces a method for reactivating backdoors by optimizing the adversary’s trigger after a defense has been applied. In the black-box setting considered, the attacker is allowed unlimited queries to the defended model and receives the corresponding softmax outputs. Leveraging this information, the adversary computes sample-specific perturbations that, when applied to backdoor inputs, aim to restore misclassification to the original target class.

In Table~\ref{tab:reactivation_attack}, we report the reactivation ASR and RDR of IMS, FT-SAM, and SAU using the GTSRB dataset with 100 SPC as the evaluation setting. Overall, while IMS exhibits slightly higher susceptibility to backdoor reactivation compared to SAU, its vulnerability is comparable to FT-SAM. Notably, in 50\% of the evaluated scenarios, IMS maintains a reactivation ASR below 50\%.

Regarding reactivation RDR, which is the recovery accuracy of reactivation samples using their original (clean task) labels relative to the original model's accuracy, we observe only minor differences across the three defenses. This suggests that while SAU more effectively prevents targeted misclassification through backdoor reactivation, it does not necessarily mitigate untargeted misclassification. In contrast, IMS and FT-SAM exhibit similar RDR performance to SAU, with a larger proportion of reactivation samples being misclassified as the target class. Although the primary focus of this work is not on reactivation attacks, we highlight this finding as a critical consideration for future research in backdoor mitigation. 

\begin{table}[h]
\caption{ASR and RDR (x100) of IMS, FT-SAM, and SAU under backdoor reactivation attack on GTSRB dataset with SPC=100. Columns $\Delta_\text{SAU}$ and $\Delta_\text{FT-SAM}$ represent the performance difference of IMS compared to SAU and FT-SAM, respectively. A negative value signifies better performance for IMS (lower ASR, higher RDR).} %A: ResNet and B: MobileNet.
\centering
\scalebox{0.78}{
\begin{tabular}{cc|ccc|cc||ccc|cc}
\toprule
 & & \multicolumn{5}{c||}{ASR} & \multicolumn{5}{c}{RDR} \\
Attack & Model & IMS & SAU & FT-SAM & $\Delta_\text{SAU}$ & $\Delta_\text{FT}$
       & IMS & SAU & FT-SAM & $\Delta_\text{SAU}$ & $\Delta_\text{FT}$ \\
\midrule
\multirow{2}{*}{BadNet} & ResNet & 61.39 & 0.41 & 58.45 & 60.99 & 2.94 & 63.13 & 23.91 & 60.62 & 39.22 & 2.51 \\
 & MobileNet & 78.18 & 44.07 & 58.01 & 34.11 & 20.17 & 79.32 & 74.64 & 60.89 & 4.69 & 18.43 \\
\multirow{2}{*}{BPP} & ResNet & 19.06 & 41.64 & 43.75 & -22.58 & -24.69 & 42.13 & 51.50 & 48.57 & -9.37 & -6.44 \\
 & MobileNet & 28.34 & 26.53 & 44.14 & 1.81 & -15.81 & 49.14 & 60.13 & 52.02 & -10.99 & -2.88 \\
\multirow{2}{*}{Inputaware} & ResNet & 44.15 & 22.81 & 73.90 & 21.34 & -29.75 & 53.93 & 68.55 & 75.48 & -14.62 & -21.55 \\
 & MobileNet & 37.86 & 22.90 & 51.04 & 14.96 & -13.18 & 58.85 & 70.30 & 59.10 & -11.44 & -0.24 \\
\multirow{2}{*}{SSBA} & ResNet & 98.19 & 70.68 & 84.34 & 27.51 & 13.85 & 98.31 & 95.00 & 87.53 & 3.31 & 10.78 \\
 & MobileNet & 94.42 & 67.27 & 94.47 & 27.15 & -0.05 & 95.49 & 89.26 & 95.73 & 6.23 & -0.25 \\
\multirow{2}{*}{WaNet} & ResNet & 91.59 & 63.94 & 49.86 & 27.65 & 41.73 & 91.42 & 71.81 & 52.14 & 19.61 & 39.28 \\
 & MobileNet & 51.03 & 43.16 & 52.83 & 7.88 & -1.80 & 57.87 & 63.33 & 55.19 & -5.46 & 2.68 \\
\midrule
\multicolumn{5}{r|}{Mean Diff} & \textbf{20.08} & \textbf{-0.66} & \multicolumn{3}{r|}{Mean Diff} & \textbf{2.12} & \textbf{4.23} \\
\bottomrule
\end{tabular}}
\label{tab:reactivation_attack}
\end{table}

\section{Additional Results and Plots} \label{appendix:additional_results}

\subsection{Model and Attack Plots} \label{appendix:model_attack_extra}

In Fig.~\ref{fig:different_settings_full} we provide the complete version of the summary results provided in the main text as Fig.~\ref{fig:different_settings}. As discussed in section~\ref{results}, IMS significantly improves the inconsistent ASR and RDR performance of existing pruning methods across different model architectures and attack types. However, compared to BTI-DBF, FT-SAM, and SAU, IMS exhibits slightly higher ASR variance across the considered attacks. Interestingly, the RDR performance patterns of IMS, BTI-DBF, FT-SAM, and SAU are generally similar across attack types, with IMS showing slightly lower variance and median RDR in most cases.

\subsection{Density Plots}

In Fig.~\ref{fig:density_plot_all} we present a scatter plot of the ASR vs RDR and ARR performance of each method. Note, that ideal performance is in the bottom left-hand corner of each plot. Consistent with the findings presented in Section~\ref{results}, these plots show that IMS substantially enhances the performance of existing pruning-based methods. This is evident from the pronounced shift in result density away from the top right and bottom left corners when comparing IMS to FP, ANP, AWN, RNP, NFT, and FMP. Compared to fine-tuning works, IMS trades off some ASR performance, as indicated by greater variance along the ASR axis, for improved RDR and ARR performance. This is supported by the ARR and RDR results of BTI-DBF, FT-SAM and SAU exhibiting greater variance along these axes compared to IMS. 

Figure~\ref{fig:density_plot_spc} presents scatter plots of ASR versus RDR and ARR for IMS and each fine-tuning method at SPC levels of 2, 10, and 100. In line with the results discussed in Section~\ref{results}, these plots demonstrate that IMS maintains notably more stable ARR and RDR performance under reduced data availability compared to fine-tuning methods. In particular, the ARR and RDR performance of fine-tuning methods becomes more varied compared to IMS as SPC is reduced.

\begin{figure}[h!]
    \begin{subfigure}[b]{1\textwidth}
        \centering
        \includegraphics[width=\linewidth]{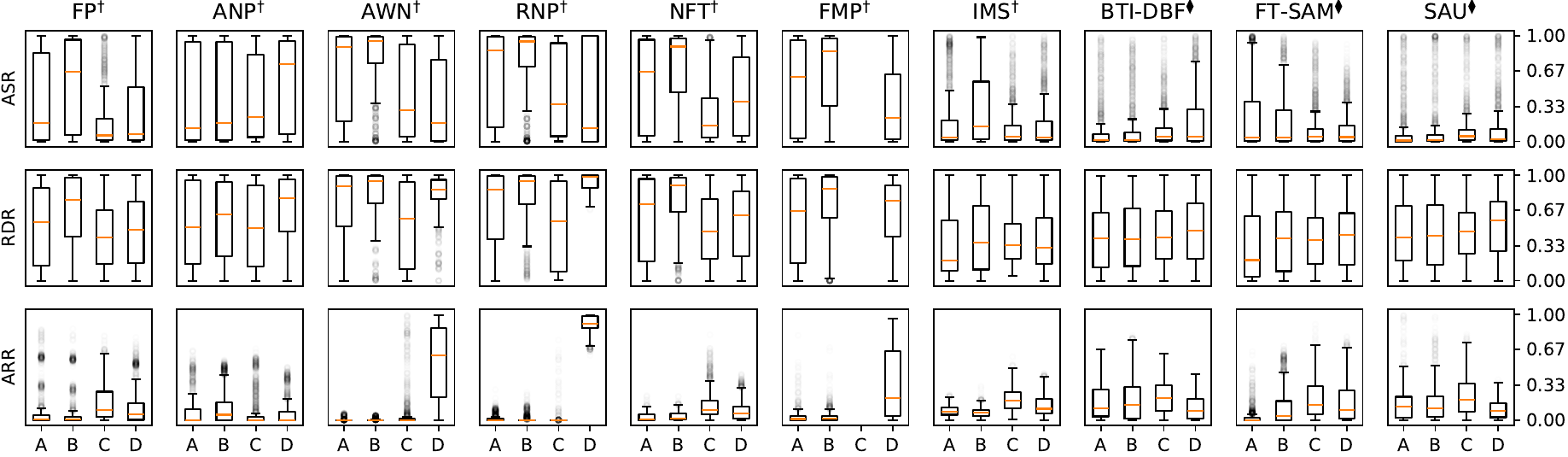}
        \caption{Model}
%        \label{fig:model}
    \end{subfigure}
    \begin{subfigure}[b]{1\textwidth}
        \centering
        \includegraphics[width=\linewidth]{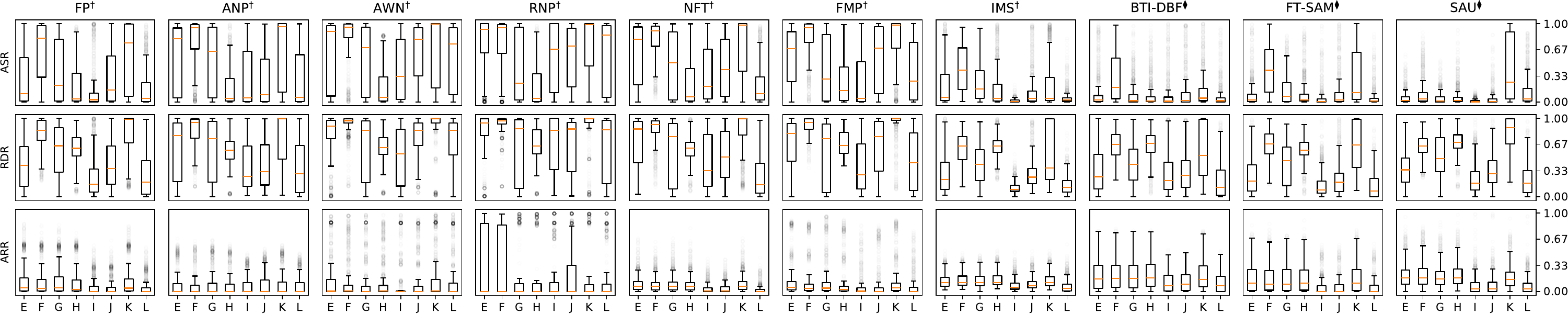}
        \caption{Attack}
%        \label{fig:attack}
    \end{subfigure}

    \caption{Box plots illustrating ASR, RDR, and ARR results for IMS compared to all tested methods, across different Model and Attack settings. In Plot (A) A: VGG, B: ResNet, C: EfficientNet and D: MobileNet. In Plot (B) E: BadNet, F: Blended, G: LF, H: Signal, I: BPP, J: Inputaware, K: SSBA and L: WaNet.}
    \label{fig:different_settings_full}
\end{figure}

\begin{figure}[h!]
    \begin{subfigure}[b]{1\textwidth}
        \centering
        \includegraphics[width=\linewidth]{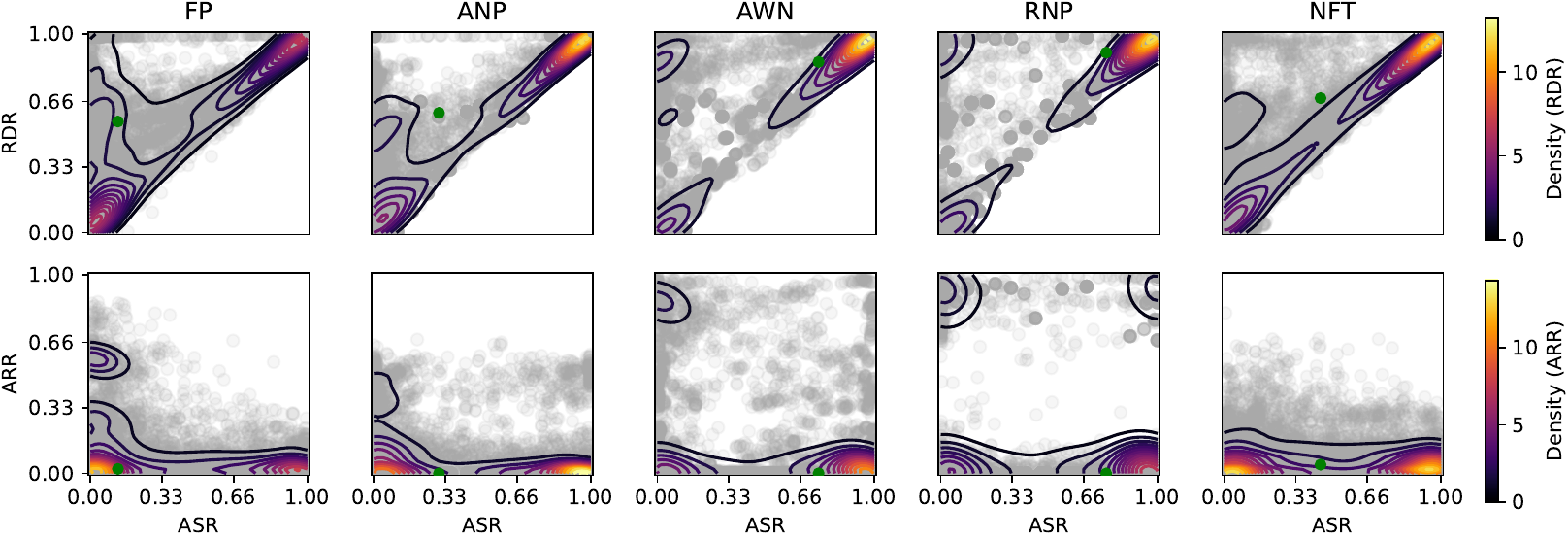}
    \end{subfigure}
    \begin{subfigure}[b]{1\textwidth}
        \centering
        \includegraphics[width=\linewidth]{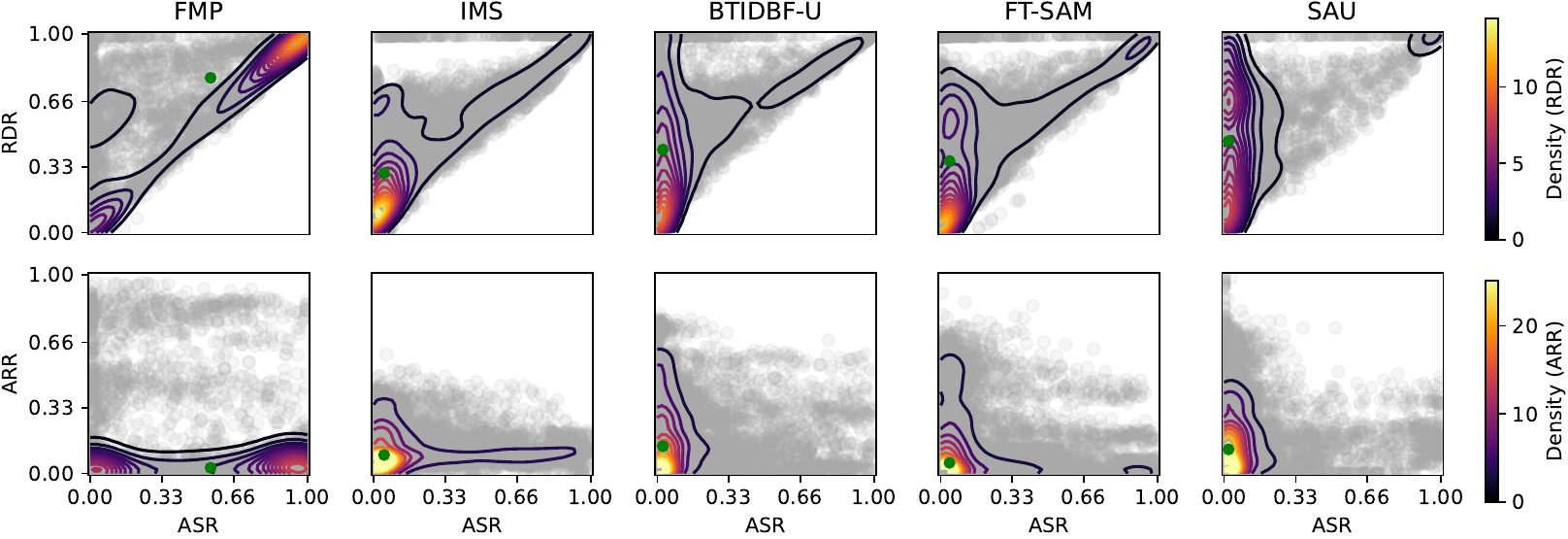}
    \end{subfigure}

    \caption{Scatter plots of ASR vs RDR and ASR vs RDR for IMS and various existing pruning and fine-tuning approaches. Green points denote the median values.}
    \label{fig:density_plot_all}
\end{figure}

\begin{figure}[h!]
    \begin{subfigure}[b]{0.5\textwidth}
        \centering
        \includegraphics[width=\linewidth]{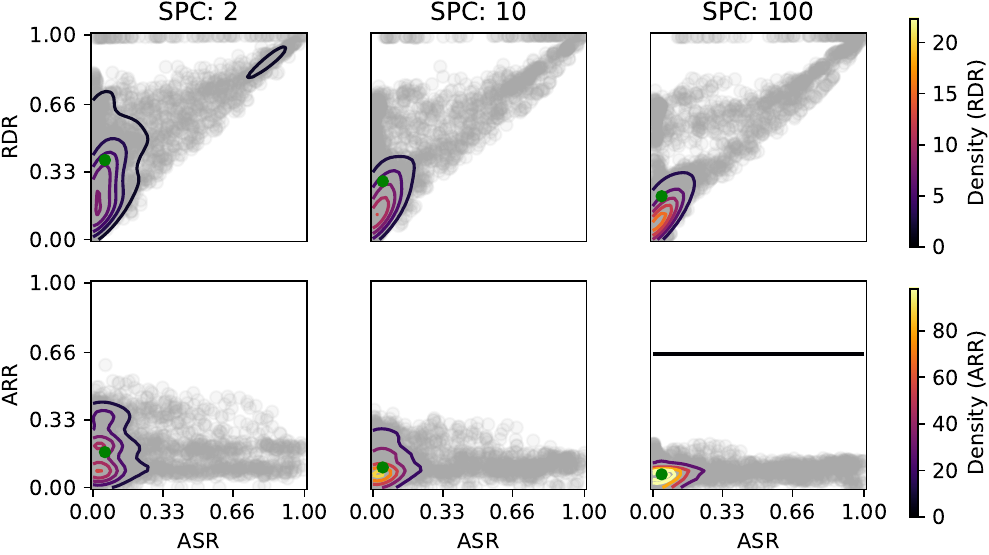}
        \caption{IMS}
    \end{subfigure}
    \begin{subfigure}[b]{0.5\textwidth}
        \centering
        \includegraphics[width=\linewidth]{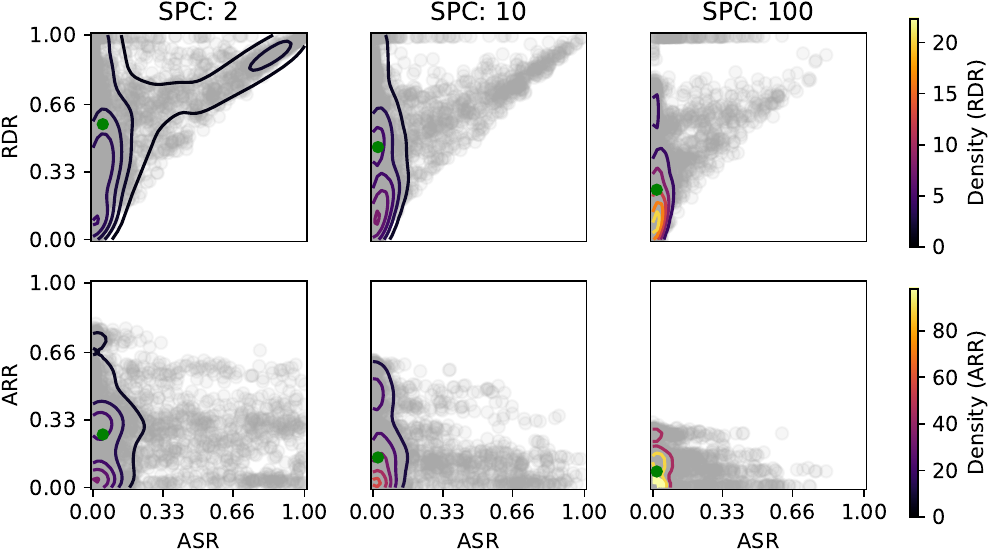}
        \caption{BTI-DBF}
    \end{subfigure}
    \begin{subfigure}[b]{0.5\textwidth}
        \centering
        \includegraphics[width=\linewidth]{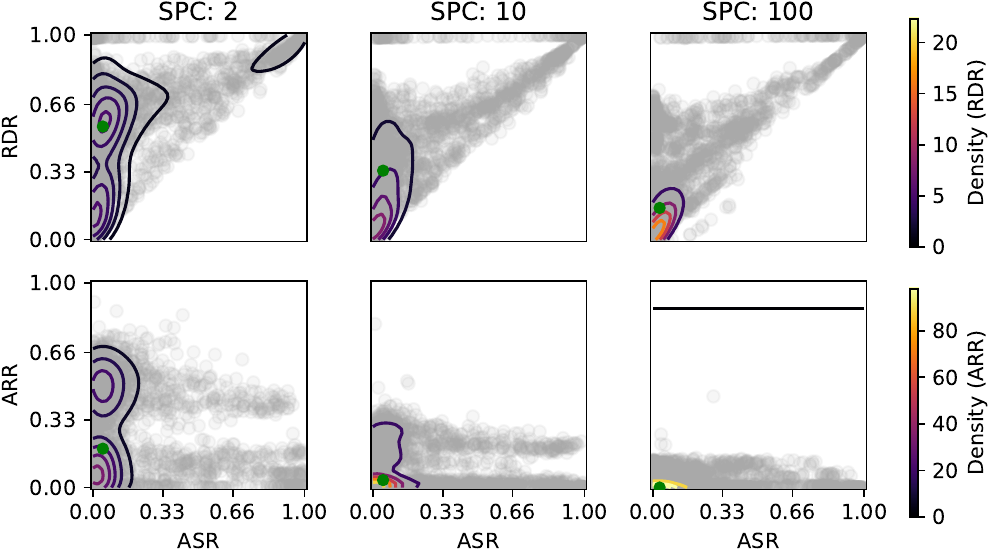}
        \caption{FT-SAM}
    \end{subfigure}
    \begin{subfigure}[b]{0.5\textwidth}
        \centering
        \includegraphics[width=\linewidth]{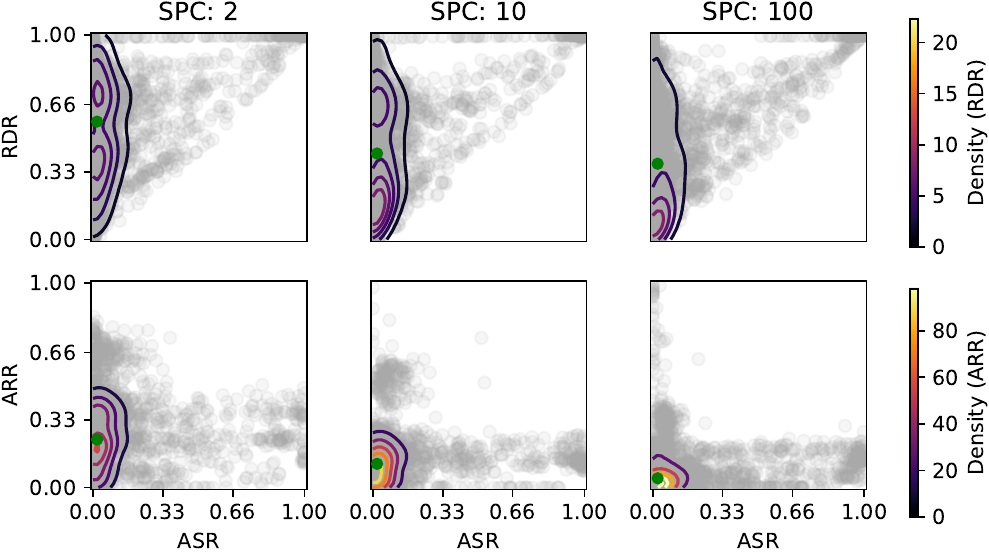}
        \caption{SAU}
    \end{subfigure}

    \caption{Scatter plots of ASR vs RDR and ASR vs RDR for IMS and various existing pruning and fine-tuning approaches, with SPC values of 2, 10, and 100. Green points denote the median values.}
    \label{fig:density_plot_spc}
\end{figure}

%\section{Implementation Details}

\end{document}